\pdfoutput=1
\documentclass[letterpaper]{article}

\PassOptionsToPackage{compress}{natbib}
\PassOptionsToPackage{hyperfootnotes=false}{hyperref}
\usepackage[final]{corl_2026}
\usepackage[utf8]{inputenc}
\usepackage[T1]{fontenc}
\usepackage{hyperref}
\usepackage{url}
\usepackage{booktabs}
\usepackage{amsfonts}
\usepackage{amsmath}
\usepackage{amssymb}
\usepackage{mathtools}
\usepackage{nicefrac}
\usepackage{microtype}
\usepackage{graphicx}
\usepackage{xcolor}
\usepackage{enumitem}
\usepackage{algorithm}
\usepackage{algpseudocode}
\usepackage{bm}
\usepackage{multirow}
\usepackage[normalem]{ulem}
\usepackage{caption}
\usepackage{wrapfig}

\hypersetup{
  pdftitle={Constraint-Decomposed Impedance Control from Generative Policy for Articulated Object Manipulation},
  pdfauthor={Kyungseo Park, Byeongdo Lim, Jungbin Lim, Minseok Choi, Frank C. Park}
}

\newcommand{\se}{\mathrm{se}}
\newcommand{\SE}{\mathrm{SE}}
\newcommand{\so}{\mathrm{so}}
\newcommand{\SO}{\mathrm{SO}}
\newcommand{\Ad}{\mathrm{Ad}}
\newcommand{\Rbb}{\mathbb{R}}

\title{Constraint-Decomposed Impedance Control from Generative Policy for Articulated Object Manipulation}

\author{%
  \makebox[0pt]{Kyungseo Park\setcounter{footnote}{1}\thanks{Current affiliation: Department of Mechanical Engineering, KAIST. Email: \texttt{erickun0125@kaist.ac.kr}} \quad Byeongdo Lim \quad Jungbin Lim \quad Minseok Choi \quad Frank C. Park} \\[3pt]
  Seoul National University \\
  \makebox[0pt]{\texttt{erickun0125@snu.ac.kr \quad \{bdlim,jblim97\}@robotics.snu.ac.kr \quad \{cms0714,fcp\}@snu.ac.kr}}
}

\begin{document}

\maketitle

\begin{abstract}
    Articulated object manipulation tasks such as opening doors, tightening bolts, and pulling drawers impose kinematic constraints on a robot end-effector's motion that neural policies may fail to satisfy.
Unlike hybrid force-position control that strictly separates force and position control into orthogonal subspaces, impedance control modulates a mechanical impedance system in the form of a virtual inertia, damping, and stiffness.
However, selecting task-appropriate impedance gains remains challenging for robot policies.
In this paper, we address this problem for the case of manipulating articulated objects.
Our constraint-decomposed impedance control method uses the sampling diversity of a generative policy to estimate the feasible motion directions in twist space.
The key idea is to infer the feasible and constrained subspaces of twists from an ensemble of trajectories generated by the policy from the same observation.
Principal component analysis (PCA) of the end-effector twists determines the feasible twist subspace, which is then used to form $6 \times 6$ impedance gain matrices.
For many common articulated objects, the feasible twist subspace is independent of the object configuration, so the subspace and resulting gains are estimated once and held fixed during manipulation, minimizing computational overhead.
For less common cases where this independence does not hold, we instead estimate a configuration-independent twist subspace of higher dimension that contains the local feasible subspaces, allowing the same fixed-gain strategy to be used.
Experiments across simulated and real-world single- and multi-DoF objects show that the proposed method consistently reduces constraint-violating contact loads relative to isotropic impedance control while maintaining task execution.
Project page:\newline\url{https://erickun0125.github.io/constraint-decomposed-impedance/}

\end{abstract}

\keywords{Articulated Object Manipulation, Impedance Control}

\section{Introduction}
\label{sec:intro}

\begin{figure}[t]
    \centering
    \includegraphics[width=\linewidth]{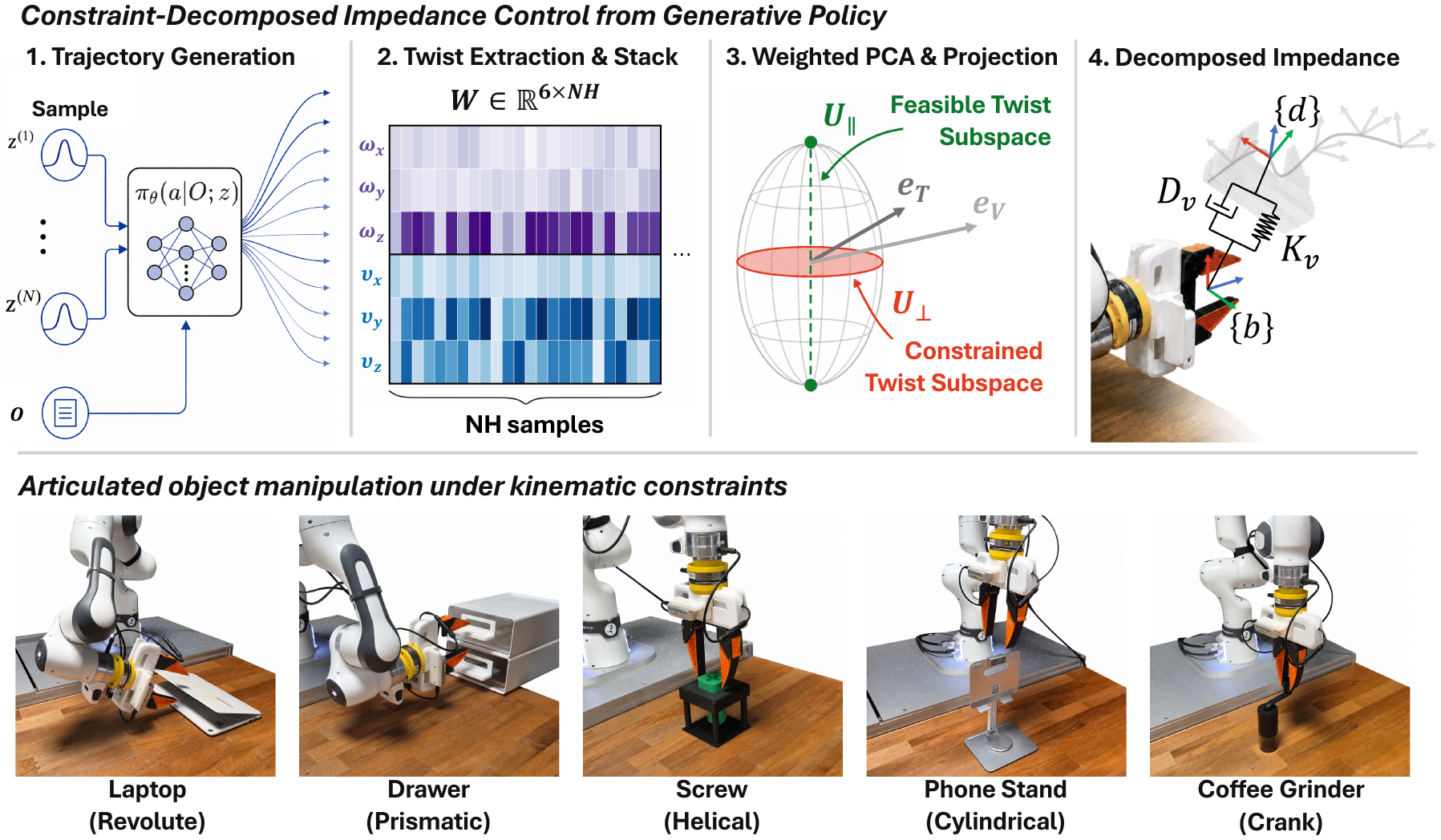}
    \caption{\textbf{Overview of the proposed constraint-decomposed impedance control.}
    Top: (1) At grasp closure, the generative policy $\pi_\theta(a| O;z)$ samples multiple trajectories from the same observation $O$.
    (2) The trajectories are converted to end-effector body twists and stacked into a data matrix $W\in\mathbb{R}^{6\times NH}$.
    (3) A weighted PCA recovers the feasible twist subspace $U_\parallel$ and its complementary constrained subspace $U_\perp$, (4) which define the constraint-decomposed impedance gains $K_v, D_v$ that are stiff along $U_\parallel$ and compliant along $U_\perp$.
    Bottom: the articulated objects used in our real-world experiments, spanning revolute, prismatic, helical, cylindrical, and crank mechanisms.}
    \label{fig:overview}
    \vspace{-0.25em}
\end{figure}

Opening doors, tightening bolts, and pulling drawers all require a robot end-effector to move according to the kinematic constraints of the object.
A learned policy, however, is not guaranteed to output a trajectory that exactly satisfies these constraints~\cite{xu2024leto, foland2025howwell}.
When tracked with high stiffness, even minor constraint-violating motions can produce large contact and internal forces.
This makes compliant execution important when applying learned policies to articulated objects.

Impedance control~\cite{hogan1985impedance} provides a natural way to add this compliance.
By lowering the impedance in constrained directions, the robot can avoid forcing motions that the object does not allow.
However, making the robot compliant in all directions is also undesirable, since the robot still needs to transmit enough force along the directions that drive the articulation.
The central issue is therefore not only choosing the magnitude of the impedance gains, but also deciding which motion components should be tracked with high impedance and which should remain compliant.
The core requirement is to identify the constraint geometry of the articulated object, namely which motion directions are feasible and which are constrained, and translate this geometry into \textit{constraint-decomposed} impedance gains.
However, using such constraint geometry to design direction-wise impedance gains remains underexplored in policy-driven articulated object manipulation.

For articulated objects, the constraints have a simple structure in twist space: the feasible body twists at each object configuration form a linear subspace of $\se(3)$~\cite{lynch2017modern}.
Furthermore, for many common mechanisms, including revolute, prismatic, helical, cylindrical, and planar joints, the subspace of locally feasible body twists does not change with object configuration.
For less common cases where the feasible twist subspace varies with object configuration, there still exists a higher-dimensional, configuration-independent subspace that envelops all the locally feasible twists.
Once identified, such a subspace can be reused during manipulation to define how impedance gains should be decomposed.

One way to identify the structure of feasible motions is to explicitly estimate the screw axes of the object from vision, force/torque sensing, physical interaction, or demonstrations~\cite{sturm2011probabilistic, katz2013interactive, hausman2015active, liu2019learning, martinmartin2019coupled, jain2021screwnet, jain2022distributional, mahalingam2023human, kim2025screwsplat, buchanan2026online}.
These works have shown promising results, but require separate perception or estimation modules.
Moreover, the estimated screw axes must be aligned with respect to a reference frame before they can be used for manipulation.
Another approach is to estimate the constraints in terms of robot motion from demonstrations, but the results are not reusable when the robot grasps the object at a different location.

To address these limitations, the first part of our method estimates the feasible twist subspace directly from output trajectories of a generative policy, without requiring explicit screw-axis estimation or a demonstration aligned with the current grasp pose.
For our purposes, a single output trajectory does not provide sufficient information, because it may contain constraint-violating components and may not cover all feasible directions, especially for multi-DoF objects.
We therefore use multiple trajectories sampled from a generative policy.
From the same observation, the policy generates multiple candidate trajectories, and we convert these trajectories into end-effector body twists and use their dominant subspace as an estimate of the feasible twist subspace.

In the second part of our method, the estimated subspace is used to construct the constraint-decomposed impedance gains (Figure~\ref{fig:overview}).
High impedance gains are assigned along the feasible subspace so that the robot stiffly follows the motions that drive the object, whereas low gains are assigned to the complementary subspace so that constraint-violating motions are not enforced.
The subspace and the resulting gains are estimated once at the beginning of manipulation and held fixed afterward, minimizing computational overhead.

We evaluate our approach in simulated and real-world environments across single- and multi-DoF articulated objects, covering objects with configuration-independent and configuration-dependent feasible twist subspaces.
The experimental results demonstrate that the use of constraint-decomposed gains substantially reduces contact forces along the constraint-violating directions when compared to a baseline controller with isotropic gains while maintaining or improving task execution.

\section{Related Work}
\label{sec:related_work}

\paragraph{Explicit articulation estimation.}
A common way to obtain constraint information is to explicitly estimate the object's articulation structure.
Vision-based methods estimate joint axes, screw parameters, or articulation models from depth or RGB observations~\cite{jain2021screwnet, jain2022distributional, kim2025screwsplat}.
Interactive perception methods estimate kinematic models by moving the object and observing its motion, often using visual, proprioceptive, or force/torque feedback~\cite{sturm2011probabilistic, katz2013interactive, hausman2015active, martinmartin2019coupled, buchanan2026online}.
Demonstration-based methods infer articulated constraints or screw-motion structure from human demonstrations~\cite{liu2019learning, mahalingam2023human}.
These estimators provide useful geometric information, but the estimated model is obtained in a separate stage and must be aligned with the grasp and controller frames before it can be used for impedance control.
Instead, our method requires no separate articulation perception module or physical probing.

\paragraph{Variance-based impedance and direction-wise control.}
Calinon et al.~\cite{calinon2010learning} mapped positional demonstration variance to stiffness, assigning high stiffness to low-variance directions.
In contrast, we analyze variation in body-twist space, where low variation along a twist direction means that the object rarely moves in that direction and therefore calls for low impedance.
SwRL~\cite{kim2024swrl} uses a task-space decomposition for hybrid motion-force control in articulated object manipulation, but the decomposition is given as prior knowledge rather than estimated from the object motion.
ACP~\cite{hou2025adaptive} sets low stiffness along the external-force direction for contact-rich manipulation, but this direction is defined by a force-based rule.
In contrast, our method sets direction-wise impedance according to the constraint geometry of the articulated object, rather than from a predefined rule.

\section{Preliminaries}
\label{sec:background}

\subsection{Operational-Space Impedance Control}
\label{sec:bg_impedance}

Operational-space impedance control makes the robot end-effector respond compliantly to external forces while tracking a desired task-space trajectory.
To this end, it specifies a virtual mass-spring-damper relation between the task-space tracking error and the external wrench as
\begin{equation}
    M_v \dot e_V + D_v e_V + K_v e_T = F_{\rm ext},
    \label{eq:bg_target_impedance}
\end{equation}
where $e_T,e_V\in\mathbb{R}^6$ are the task-space pose and twist errors, $F_{\rm ext}$ is the external wrench, and $M_v,D_v,K_v\in\mathbb{R}^{6\times 6}$ are virtual inertia, damping, and stiffness.
$M_v$ is usually set to match the task-space inertia $\Lambda(q)=(J(q)M(q)^{-1}J(q)^\top)^{-1}$ to compensate the effect of robot dynamics, where $J$ is the robot Jacobian and $M$ is the joint-space inertia.
$D_v$ and $K_v$ are user-defined damping and stiffness parameters for handling trade-off between accurate tracking and force compliance.
The target impedance behavior of the end-effector is realized by a task-space feedback wrench $F_{\mathrm{fb}}$.
To achieve this, the total commanded task-space wrench $F_{\mathrm{cmd}}$ is formed by combining $F_{\mathrm{fb}}$ with a dynamic feedforward term $F_{\mathrm{ff}}$ compensating for the robot dynamics:
\begin{equation}
    F_{\mathrm{cmd}} = F_{\mathrm{ff}} + F_{\mathrm{fb}}, \quad F_{\mathrm{fb}} = K_v e_T + D_v e_V,
    \label{eq:bg_commanded_wrench}
\end{equation}
where $F_{\mathrm{cmd}}$ is then mapped to joint torques as $\tau = J^\top F_{\mathrm{cmd}}$.

Intuitively, high gains promote accurate trajectory tracking for task execution, whereas low gains allow compliance to avoid excessive contact and internal forces.
In impedance control for articulated-object manipulation, the key is therefore to distinguish the feasible directions that should receive high gains from the constrained directions that should remain compliant with low gains.
Finding these directions is the main problem addressed in this paper.

\subsection{Kinematics of Articulated Objects}
\label{sec:bg_kinematics}

We model an articulated object as a $k$-DoF open chain of rigid links, with the base link fixed to the environment and the distal link rigidly grasped by an end-effector.
We attach a space frame $\{s\}$ to the base, and a body frame $\{b\}$ to the end-effector.
We denote the object configuration by $q\in\mathbb{R}^k$, which determines $T_{sb}(q)\in\SE(3)$, the pose of $\{b\}$ expressed in $\{s\}$.
Following open-chain kinematics~\cite{lynch2017modern}, any constraint-satisfying body twist $V_b = J_b(q)\dot q$ must lie within the \textit{local feasible subspace}
\begin{equation}
    \mathcal{S}(q) := \operatorname{span}\{B_1(q), \ldots, B_k(q)\} \subset \mathbb{R}^6,
    \label{eq:body_twist_kinematics}
\end{equation}
where $J_b(q)\in\mathbb{R}^{6\times k}$ is the body Jacobian with columns $B_1(q),\ldots,B_k(q)$.

\begin{wrapfigure}{r}{0.46\textwidth}
    \centering
    \vspace{-10pt}
    \includegraphics[width=0.46\textwidth]{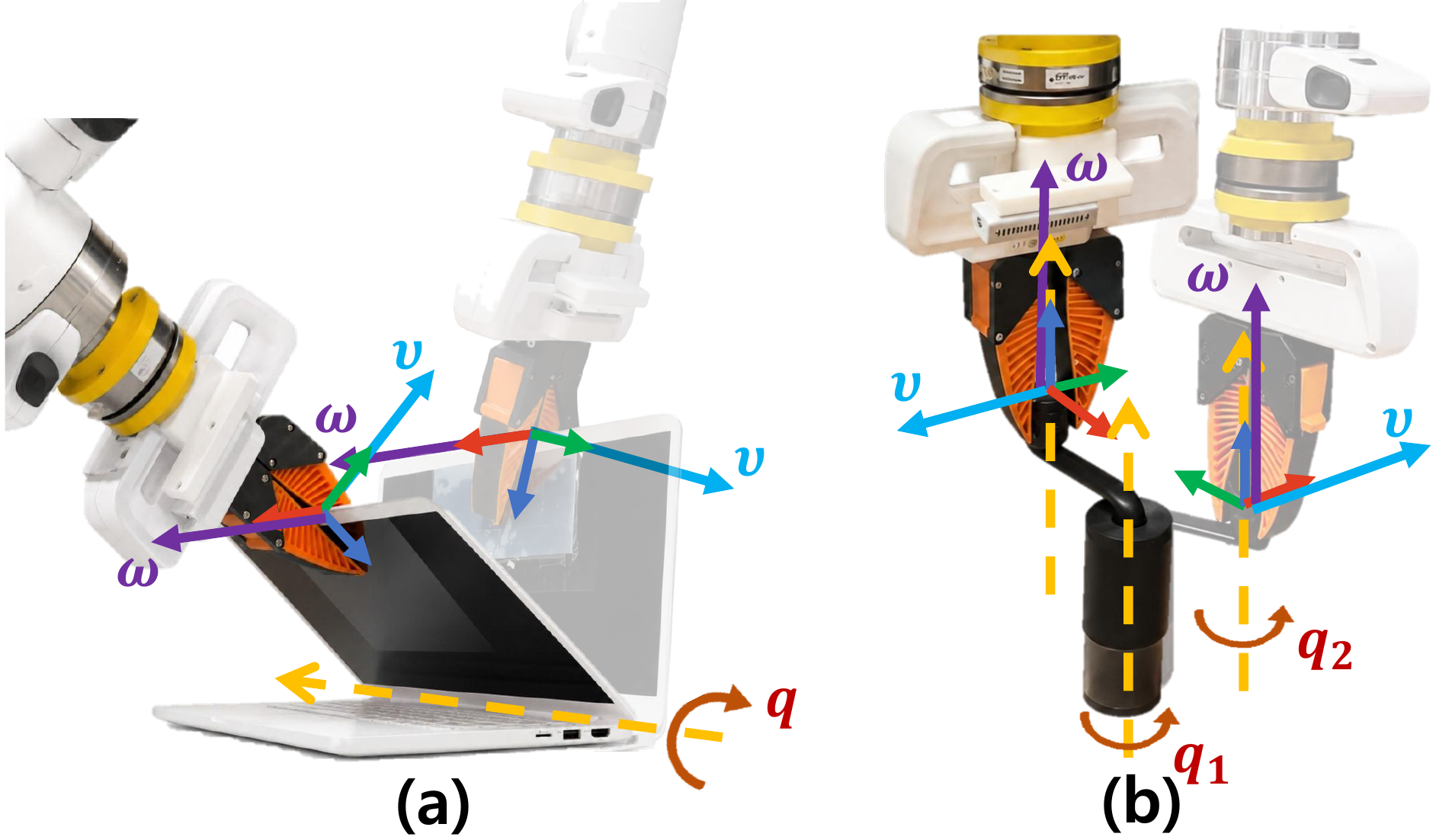}
    \vspace{-8pt}
    \caption{\textbf{Articulated object examples.}
    Visualization of the body frame and feasible body twist $V_b = (\omega, v)$, with joint axes (yellow dashed).
    (a) For a laptop with a revolute joint, $\omega$ always aligns with $+x$ and $v$ with $+y$ (configuration-independent).
    (b) For a coffee-grinder crank, $v$ aligns with $-y$ in solid case, but with $+x$ in translucent case (configuration-dependent).}
    \vspace{-12pt}
    \label{fig:articulated_examples}
\end{wrapfigure}

We distinguish two classes of articulated objects according to whether the local feasible subspace $\mathcal{S}(q)$ depends on the object configuration $q$.
First, in the \emph{configuration-independent} case, $\mathcal{S}(q)$ remains the same for all $q\in\mathbb{R}^k$.
Most single-joint objects, including objects with revolute, prismatic, cylindrical, or spherical joints, fall into this class.
For example, the laptop in Figure~\ref{fig:articulated_examples}(a) is a configuration-independent object because $B_1(q)=(1,0,0,0,l,0)^\top$ for all $q\in\mathbb{R}$ with the lever arm length $l$.
A mathematical sufficient condition for configuration independence is provided in Appendix~\ref{app:lie_subalgebra}.
Second, in the \emph{configuration-dependent} case, $\mathcal{S}(q)$ varies with $q$.
For example, the coffee grinder in Figure~\ref{fig:articulated_examples}(b) is a configuration-dependent object because it has  $B_1(q)=(0,0,1,-l \sin q_2,-l\cos q_2,0)^\top$ and $B_2(q)=(0,0,1,0,0,0)^\top$ with the lever arm length $l$.
Most multi-joint objects fall into this class.

To cover both classes, we define the \textit{global envelope subspace} as
\begin{equation}
    \mathcal{S}_{\rm env}:=\operatorname{span}\!\left(\bigcup_{q\in\mathbb{R}^k}\mathcal{S}(q)\right)\subset\mathbb{R}^6,
    \label{eq:subspace_envelope}
\end{equation}
which is the smallest linear subspace that contains all local feasible subspaces.
Since we do \textit{not} assume prior knowledge of whether a given object is configuration-independent, $\mathcal{S}_{\rm env}$ serves as a natural target for our method to estimate.
For configuration-independent objects, at any configuration $q$, $V_b$ is feasible if and only if $V_b \in \mathcal{S}_{\rm env}$.
However, for configuration-dependent objects, $V_b$ being feasible implies $V_b \in \mathcal{S}_{\rm env}$ but the converse does not hold.

\section{Method}
\label{sec:method}

\begin{algorithm}[t]
    \caption{Constraint-Decomposed Impedance Gains from Generative Policy}
    \label{alg:constraint_decomposed}
    \begin{algorithmic}[1]
        \setlength{\itemsep}{2pt}

        \Statex \hspace*{-\algorithmicindent} \textbf{Given:} policy $\pi(a \mid O)$; inertia metric $\bar\Lambda$; timestep $\Delta t$
        \Statex \hspace*{-\algorithmicindent} \textbf{Hyperparameters:} ensemble size $N$; horizon $H$; gains $k_\parallel, k_\perp, d_\parallel, d_\perp$; scale bounds $\alpha_{\min}, \alpha_{\max}$; thresholds $v_{\rm abs}, \kappa_{\rm rel}$
        \Statex \hspace*{-\algorithmicindent} \textbf{Input:} observation $O$ \hfill \textbf{Output:} gains $K_v, D_v$

        \For{$i = 1, \ldots, N$}
            \State $a^{(i)} = (T^{(i)}_1, \ldots, T^{(i)}_H) \sim \pi(\cdot \mid O)$ \Comment{sample trajectory ensemble}
            \State $V^{(i)}_{b,t} \gets \frac{1}{2 \Delta t} \big[ \log \big( (T^{(i)}_{t-1})^{-1} T^{(i)}_{t+1} \big) \big]^\vee$, \ $t = 1, \ldots, H$ \Comment{one-sided at $t{=}1,H$}
        \EndFor
        \State $W \gets [\, V^{(1)}_{b, 1} \, \cdots \, V^{(N)}_{b, H} \,]$
        \State $u = [\, u_\omega^\top \, u_v^\top \,]^\top \gets$ first left singular vector of $W$;\ \ $\alpha \gets \operatorname{clip}\!\big( \|u_v\| / \|u_\omega\|, \alpha_{\min}, \alpha_{\max} \big)$
        \State $L_\alpha \gets \operatorname{blkdiag}(\alpha I_3, I_3)$;\ \ $\tilde{U} \tilde{\Sigma} \tilde{V}^{\top} \gets \operatorname{SVD}(L_\alpha W)$;\ \ $U = [\, u_1 \, \cdots \, u_6 \,] \gets L_\alpha^{-1} \tilde{U}$
        \State $\lambda_k \gets \tilde\sigma_k^2 / (NH)$,\ \ $k = 1, \ldots, 6$ \Comment{$\tilde\sigma_k$: $k$-th singular value}
        \State $m \gets \#\{ k : \sqrt{\lambda_k} > \max(\kappa_{\rm rel}\sqrt{\lambda_6},\, v_{\rm abs}) \}$;\ \ $U_\parallel \gets [\, u_1 \, \cdots \, u_m \,]$ \Comment{dimension selection}
        \State $P^{\bar\Lambda}_\parallel \gets U_\parallel ( U_\parallel^\top \bar\Lambda U_\parallel )^{-1} U_\parallel^\top \bar\Lambda$,\ \ $P^{\bar\Lambda}_\perp \gets I_6 - P^{\bar\Lambda}_\parallel$
        \State \Return $K_v \gets \bar\Lambda ( k_\parallel P^{\bar\Lambda}_\parallel + k_\perp P^{\bar\Lambda}_\perp )$,\ \ $D_v \gets \bar\Lambda ( d_\parallel P^{\bar\Lambda}_\parallel + d_\perp P^{\bar\Lambda}_\perp )$
    \end{algorithmic}
\end{algorithm}

We now introduce our constraint-decomposed impedance control framework.
We assume we have a generative policy that can manipulate an articulated object, but we do \textit{not} modify the policy and do \textit{not} assume any prior knowledge of the object's kinematic model, including DoF, joint types, or screw axes.
Our goal is to (i) estimate the feasible body twist directions using the policy outputs and (ii) convert the estimate into decomposed impedance gains for effective and safe manipulation of articulated objects.
The overall pipeline is summarized in Algorithm~\ref{alg:constraint_decomposed}.

\subsection{Constraint Estimation from a Trajectory Ensemble}
\label{sec:constraint_estimation}

We assume access to a generative policy $\pi(a \mid O)$ for manipulating an articulated object, where the action $a=(T_1,\ldots,T_H)$, $T_t\in{\rm SE}(3)$ is an $H$-step desired pose trajectory for the end-effector frame $\{b\}$ expressed in the space frame $\{s\}$, and the observation $O$ consists of robot proprioception and visual inputs.
The goal in this subsection is to estimate the global envelope subspace ${\cal S}_{\rm env}$ from the trajectories generated by the policy.

\paragraph{Body twist extraction.}
At a given observation $O$, we sample a set of $N$ action trajectories $a^{(i)}=(T^{(i)}_1,\ldots,T^{(i)}_H)$ by the policy, which we refer to as a \textit{trajectory ensemble}.
We then compute body twists $V^{(i)}_{b,t}$ at each $T_t^{(i)}$ by taking finite differences on the trajectories, and collect them into a twist matrix $W=[\,V^{(1)}_{b,1}\ \cdots\ V^{(N)}_{b,H}\,]\in\mathbb R^{6\times NH}$.
We consider a trajectory ensemble rather than a single trajectory for two reasons.
First, a single trajectory may contain constraint-violating components, while a trajectory ensemble is more likely to reveal the constraint-consistent directions across samples.
Second, for multi-DoF objects, a single trajectory may not span all feasible directions on its own.

\paragraph{Weighted PCA in twist space.}
The collected body twists in $W$ are expected to lie close to the global envelope subspace ${\cal S}_{\rm env}$, which motivates an (uncentered) principal component analysis (PCA) to estimate ${\cal S}_{\rm env}$.
In order to perform PCA in ${\rm se}(3)$, the space of twists, we first need to choose an \textit{inner product} on ${\rm se}(3)$~\cite{park1995distance}.
Let $G\in\mathbb{R}^{6\times6}$ be a symmetric positive-definite matrix, and define
\begin{equation}
    \langle V_1,V_2\rangle_G := V_1^\top G V_2,
    \qquad
    \|V\|_G^2 := V^\top G V .
\end{equation}
Given a Cholesky decomposition $G=L^\top L$ with $L\in\mathbb{R}^{6\times6}$, the $G$-weighted PCA can be done by applying singular value decomposition (SVD) to the transformed twists $LW\in\mathbb{R}^{6\times NH}$ as
\begin{equation}
    LW = \widetilde U \widetilde\Sigma \widetilde V^\top,
    \qquad
    U = L^{-1}\widetilde U,
\end{equation}
where $\widetilde U,U\in\mathbb{R}^{6\times6}$, $\widetilde\Sigma\in\mathbb{R}^{6\times NH}$, and $\widetilde V\in\mathbb{R}^{NH\times NH}$.
Let $\widetilde\sigma_1,\ldots,\widetilde\sigma_6$ denote the singular values in $\widetilde\Sigma$.
Ideally, we would have $\widetilde\sigma_1\geq\cdots\geq\widetilde\sigma_m>0$, $\widetilde\sigma_{m+1}=\cdots=\widetilde\sigma_6=0$, where $m=\dim\mathcal{S}_{\rm env}$.
However, in practice, we have $\widetilde\sigma_{m+1},\ldots,\widetilde\sigma_6>0$, because the sampled twists contain off-manifold components outside $\mathcal S_{\rm env}$.
To separate them from the feasible motion, we normalize the squared singular values by the number of twists, $\lambda_k=\widetilde\sigma_k^2/M$ with $M=NH$, so that $\sqrt{\lambda_k}$ is the RMS grasp-point speed along $u_k$, and we threshold this speed in two ways.
An absolute threshold $v_{\rm abs}$ targets sources whose off-manifold speed has a roughly fixed magnitude for a given model and sampler, such as the generation limit of the diffusion policy.
A relative threshold $\kappa_{\rm rel}$, which compares each speed with $\sqrt{\lambda_6}$, targets sources that lift the whole off-manifold part of the spectrum, including $\lambda_6$, such as grasp-chain deviations due to compliance and clearance between gripper and object.
We retain the leading directions that pass both tests, $m=\#\{k:\sqrt{\lambda_k}>\max(\kappa_{\rm rel}\sqrt{\lambda_6},\,v_{\rm abs})\}$.
Then, we use the first $m$ columns of $U$ to construct the estimated global envelope subspace $\widehat{\mathcal S}_{\rm env}$ and its basis $U_\parallel$ as
\begin{equation}
    \widehat{\mathcal S}_{\rm env}:=\operatorname{span}\{u_1,\ldots,u_m\}, \qquad U_\parallel = [\,u_1 \cdots u_m\,].
\end{equation}

\paragraph{Choice of the inner product.}
A natural choice of inner product is $G=\operatorname{blkdiag}(\alpha^2 I_3, I_3)$, where $\alpha>0$ is a characteristic length that puts angular and linear velocities on a common scale~\cite{park1995distance}.
In this case, the $G$-weighted PCA is equivalent to ordinary PCA on the scaled twists $(\alpha\omega,v)$.
In our implementation, we find the first left singular vector $u=[u_\omega^\top,u_v^\top]^\top$ of the unweighted twist matrix $W$, and then set $\alpha = \operatorname{clip}\!\left( \|u_v\|/\|u_\omega\|, \alpha_{\min}, \alpha_{\max} \right)$ with predefined $\alpha_{\min}, \alpha_{\max}$.

\subsection{Impedance Gain Decomposition}
\label{sec:gain_decomposition}

Given the estimated feasible subspace $\widehat{\mathcal S}_{\rm env}$, we now aim to construct impedance gains that are stiff along this subspace and compliant along its complement.
Since our control inputs are wrenches, while the constraints are defined in the twist space, a Euclidean projection without considering the robot's physical inertia would cause dynamic coupling.
We therefore measure orthogonality with a symmetric positive-definite inertia metric  $\bar\Lambda\in\mathbb R^{6\times 6}$, interpreted from a nominal operational-space inertia reduced to a block-isotropic form (Appendix~\ref{app:normalized_block_iso}).
This metric sets the translational-rotational scaling and links the twist split to the wrench split in a dynamically consistent way.
Because the estimated basis $U_\parallel$ spans a kinematic subspace, we weight it by the inertia metric $\bar{\Lambda}$ to map the tracking errors directly \vspace{0.2em} to the wrench level.
We therefore construct the dynamically consistent feedback projectors onto $\widehat{\mathcal{S}}_{\rm env}$:
\begin{equation}
    P_\parallel^{\bar\Lambda}
    =
    U_\parallel
    \left(
    U_\parallel^\top \bar\Lambda U_\parallel
    \right)^{-1}
    U_\parallel^\top \bar\Lambda,
    \qquad
    P_\perp^{\bar\Lambda}
    =
    I-P_\parallel^{\bar\Lambda} .
    \label{eq:lambda_projectors}
\end{equation}
Because the decomposed gains act on the feasible subspace in twist space, the tracking errors must share that geometry.
We therefore use body-frame group errors~\cite{BulloMurray1995, BulloMurray1999}.
These consist of a double-geodesic pose error $e_T$ and an adjoint-transported twist error $e_V$,
\begin{equation}
    e_T =
    \begin{bmatrix}
    \log(R_b^\top R_d)^\vee \\[2pt]
    R_b^\top(p_d-p_b)
    \end{bmatrix},
    \qquad
    e_V = \Ad_{T_b^{-1}T_d}\, V_d - V_b ,
    \label{eq:tracking_errors}
\end{equation}
with current pose $T_b=(R_b,p_b)$, desired pose $T_d=(R_d,p_d)$, and desired body twist $V_d$, all in the same body-frame coordinates as the projectors and gains.
For small errors, $e_T$ and $e_V$ reduce to the ordinary pose and twist errors of operational-space impedance control (Appendix~\ref{app:tracking_errors}).

The decomposed impedance gains for the control law \eqref{eq:bg_commanded_wrench} are
\begin{equation}
    K_v
    =
    \bar\Lambda
    \left(
    k_\parallel P_\parallel^{\bar\Lambda}
    +
    k_\perp P_\perp^{\bar\Lambda}
    \right),
    \qquad
    D_v
    =
    \bar\Lambda
    \left(
    d_\parallel P_\parallel^{\bar\Lambda}
    +
    d_\perp P_\perp^{\bar\Lambda}
    \right).
    \label{eq:decomposed_gains}
\end{equation}
With $k_\parallel>k_\perp$ and $d_\parallel>d_\perp$, the feedback is stiff along $\widehat{\mathcal S}_{\rm env}$ and compliant along its complement.

Because $P_\parallel^{\bar\Lambda}$ and $P_\perp^{\bar\Lambda}$ are $\bar\Lambda$-orthogonal, $K_v$ and $D_v$ are symmetric positive semidefinite for \vspace{-1.5pt} nonnegative gains; with $\bar\Lambda$ and $\widehat{\mathcal S}_{\rm env}$ held fixed during the articulation phase, they derive from a quadratic potential and dissipation that make the feedback passive in the small-error regime (Appendix~\ref{app:e_passivity}).

\section{Experiments}
\label{sec:experiments}

In simulated and real-world environments, we demonstrate that (i) trajectory ensembles provide accurate estimates of feasible twist subspaces, and (ii) constraint-decomposed impedance control reduces the contact force and moment associated with constrained directions while maintaining task execution.
Experimental details are provided in Appendix~\ref{app:exp_config}.

\paragraph{Evaluation metrics.}
We measure contact force and moment, and project each part onto the linear and angular complements induced by the reference feasible twist subspace.
Since the grasp pose varies across rollouts, the reference subspace is the analytic feasible subspace at each rollout's grasp pose, in simulation and in the real world, and all controllers are scored over a common articulation range.
\emph{Impulse} (ICF, ICM) and \emph{Peak} (PCF, PCM) \emph{Constraint-violating Force/Moment} report, respectively, the time-integral (N$\cdot$s, N$\cdot$m$\cdot$s) and the $95$th-percentile magnitude (N, N$\cdot$m) of these complementary force and moment components.
For configuration-dependent objects such as the universal joint and crank, the reference is the global envelope $\mathcal{S}_{\rm env}$, so these metrics measure load outside the envelope rather than local instantaneous infeasibility.
\emph{Subspace Error} (SE), reported for \textbf{Ours}, is the full-subspace maximum principal angle between $\widehat{\mathcal S}_{\rm env}$ and the reference subspace.
\emph{Success Rate} (SR), reported in simulation, is the fraction of trials in which the object completes this common articulation range.

\paragraph{Compared controllers.}
\textbf{Iso} is operational-space impedance control with isotropic gains, and \textbf{Ours} is the proposed method, which keeps the same impedance as \textbf{Iso} along the estimated feasible subspace (both rotational and translational components) and lowers it only along the constrained complement (Appendix~\ref{app:iso_ours_oracle_gains}).
In simulation, we additionally include \textbf{Oracle}, which uses our gain decomposition with the grasp-conditioned analytic reference subspace.
For fair comparison, all controllers run on the same Diffusion Policy~\cite{chi2025diffusion} checkpoint, trained from scratch for each object.

\begin{wrapfigure}{r}{0.4\textwidth}
    \centering
    \vspace{-35pt}
    \includegraphics[width=0.4\textwidth]{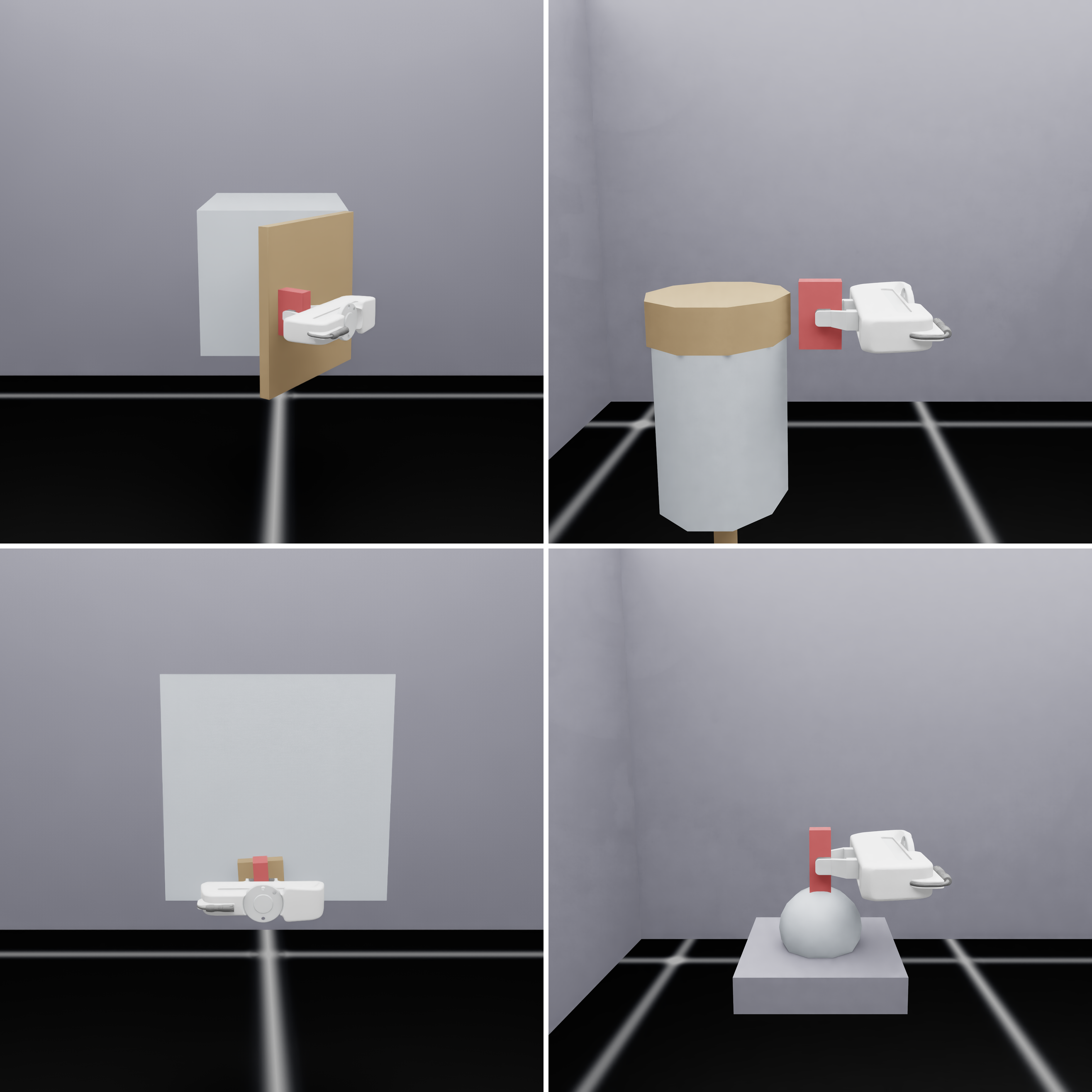}
    \caption{\textbf{Simulated articulated objects.}
    Revolute, cylindrical, planar, and universal joints in Isaac Sim, manipulated by a flying gripper.}
    \label{fig:sim_objects}
    \vspace{-25pt}
\end{wrapfigure}

\subsection{Simulation Experiments}
\label{sec:exp_sim}

\paragraph{Setup.}
We set up four articulated objects (Figure~\ref{fig:sim_objects}) in Isaac Sim~\cite{NVIDIA_Isaac_Sim} with revolute, cylindrical, planar, and universal joints.
The first three are configuration-independent, with local feasible dimension $1$, $2$, and $3$; the universal joint is configuration-dependent, with local feasible dimension $2$ and global envelope dimension $3$.
For all objects, a flying gripper grasps the object and drives it toward a target configuration; the policy outputs pose trajectories of $H=24$ steps at $0.1$~s intervals, and the impedance controller runs at $150$~Hz.
At grasp closure, \textbf{Ours} samples $N=128$ trajectories with $10$ DDIM denoising steps and selects the subspace dimension with $v_{\rm abs}=5$~mm/s and $\kappa_{\rm rel}=4$; \textbf{Ours} and \textbf{Oracle} scale the stiffness along the constrained complement by $r=0.1$ (Appendix~\ref{app:iso_ours_oracle_gains}).
For each object we run every controller from $100$ distinct initial configurations, giving $100$ attempts per controller.

\paragraph{Results.}
Table~\ref{tab:sim_main} shows two consistent trends.
First, \textbf{Ours} reduces the constraint-violating force on every task: the force impulse falls to $0.13$--$0.37{\times}$ of Iso and the peak force to $0.18$--$0.46{\times}$.
Second, Ours recovers at least $90\%$ of the Oracle's reduction relative to Iso in both force metrics, showing that the trajectory ensemble identifies the feasible twist directions accurately enough for control; this is corroborated by the small SE ($1.64^\circ$--$4.36^\circ$, largest on the configuration-dependent universal joint).
Moment loads are small overall, as they are taken about the grasp point; \textbf{Ours} still lowers the moment impulse, and the universal joint has a structurally empty angular complement because its envelope contains all three rotational DoFs.
These trends carry over to a single multi-task policy trained on all four objects, with which \textbf{Ours} reduces the force impulse to $0.14$--$0.37{\times}$ and the peak force to $0.20$--$0.49{\times}$ of Iso (Appendix~\ref{app:sim_multitask}).

\begin{table}[t]
\centering
\caption{ \textbf{Simulation results.}
Entries are Iso / Ours / Oracle; SE reports Ours only, and boldface marks the better of Iso and Ours.
Dashes indicate structurally empty complement components.
}
\label{tab:sim_main}
\vspace{0.25em}
\scriptsize
\setlength{\tabcolsep}{2.5pt}
\resizebox{\linewidth}{!}{%
\begin{tabular}{l c c c c c c}
\toprule
Task & SR (\%) & SE (\textdegree) & ICF (N$\cdot$s) & PCF (N) & ICM (N$\cdot$m$\cdot$s) & PCM (N$\cdot$m)\\
\midrule
Revolute & \phantom{0}99 / \phantom{0}99 / \phantom{0}99 & 1.64 & 7.18 / \textbf{2.62} / 2.13 & 3.38 / \textbf{1.56} / 1.48 & .29 / \textbf{.22} / .22 & .19 / \textbf{.18} / .17\\
Cylindrical & \phantom{0}99 / \textbf{100} / 100 & 1.73 & 5.31 / \textbf{0.83} / 0.53 & 1.96 / \textbf{0.36} / 0.31 & .32 / \textbf{.22} / .21 & .16 / \textbf{.14} / .14\\
Planar & \phantom{0}99 / \phantom{0}99 / \phantom{0}99 & 2.78 & 15.3 / \textbf{1.93} / 2.20 & 4.03 / \textbf{1.18} / 1.16 & .74 / \textbf{.48} / .34 & .29 / \textbf{.23} / .21\\
Universal & 100 / 100 / 100 & 4.36 & 2.18 / \textbf{0.46} / 0.34 & 1.25 / \textbf{0.23} / 0.19 & -- / -- / -- & -- / -- / --\\
\bottomrule
\end{tabular}
}
\vspace{-0.75em}
\end{table}

\subsection{Real-World Experiments}
\label{sec:exp_real}

\paragraph{Setup.}
We set up four single-joint objects; revolute, prismatic, and helical objects with one-dimensional feasible subspaces, and a cylindrical object with a two-dimensional feasible subspace.
We also include a coffee grinder with two parallel revolute joints (\textit{crank}), a configuration-dependent case with local feasible dimension $2$ and global envelope dimension $3$.
For all objects (Figure~\ref{fig:overview}, bottom), a Franka Research 3 (FR3) grasps the object and drives it toward a target configuration, with the same trajectory horizon and step interval as in simulation and the impedance controller at $1$~kHz.
At grasp closure, \textbf{Ours} samples $N=32$ trajectories with $100$ DDPM denoising steps and selects the subspace dimension with $v_{\rm abs}=3$~mm/s and $\kappa_{\rm rel}=10$ ($\kappa_{\rm rel}=4$ for the crank; Appendix~\ref{app:hyperparams}), using the same $r=0.1$.
We evaluate only \textbf{Iso} and \textbf{Ours}.
For each object, we fix the initial object configuration and base pose, and run each controller for $10$ trials, yielding $50$ trials per controller.
For a fair wrench comparison at matched task progress, we tune the impedance gains and the FR3 collision-detection thresholds so that both \textbf{Iso} and \textbf{Ours} succeed on all trials, and integrate the metrics from grasp completion to a common articulation target.

\paragraph{Results.}
The real-world trend matches simulation (Table~\ref{tab:real_main}).
Averaged over the objects, \textbf{Ours} lowers the constraint-violating force impulse and peak force to $0.72{\times}$ and $0.70{\times}$ of \textbf{Iso}, and the moment impulse and peak moment to $0.69{\times}$ and $0.56{\times}$ (prismatic excluded for the moments).
On the four configuration-independent objects, \textbf{Ours} improves every metric, with the force impulse at $0.53$--$0.93{\times}$ of \textbf{Iso} and the peak force at $0.42$--$0.96{\times}$.
The SE of \textbf{Ours} is $2.89^\circ$--$5.76^\circ$ on four objects and $13.01^\circ$ on revolute.
On the crank, the improvement is smaller: \textbf{Ours} still lowers the peak force and both moment metrics (to $0.74$--$0.93{\times}$), but its force impulse rises to $1.14{\times}$ that of \textbf{Iso}, because the fixed envelope aggregates directions feasible across the articulation, exceeding the instantaneous feasible subspace, and can assign high impedance to directions constrained at the current configuration.

\begin{table}[t]
\centering
\caption{ \textbf{Real-world results.}
Entries are Iso / Ours (Ours/Iso); SE reports Ours only, and boldface marks the better of Iso and Ours.
Daggers mark full moments, as prismatic has no feasible rotation.
}
\label{tab:real_main}
\vspace{0.25em}
\scriptsize
\setlength{\tabcolsep}{5.5pt}
\resizebox{\linewidth}{!}{%
\begin{tabular}{l c c c c c}
\toprule
Task & SE (\textdegree) & ICF (N$\cdot$s) & PCF (N) & ICM (N$\cdot$m$\cdot$s) & PCM (N$\cdot$m)\\
\midrule
Revolute & 13.01 & 46.6 / \textbf{25.2} (0.54) & 8.54 / \textbf{3.62} (0.42) & 5.00 / \textbf{3.34} (0.67) & 0.93 / \textbf{0.37} (0.40) \\
Prismatic & \phantom{0}4.74 & 12.3 / \textbf{8.79} (0.72) & 4.63 / \textbf{3.51} (0.76) & 1.84 / \textbf{1.46} (0.80)$^\dagger$ & 0.61 / \textbf{0.55} (0.90)$^\dagger$ \\
Helical & \phantom{0}2.89 & 6.96 / \textbf{6.50} (0.93) & 4.56 / \textbf{4.40} (0.96) & 1.82 / \textbf{0.60} (0.33) & 1.08 / \textbf{0.45} (0.41) \\
Cylindrical & \phantom{0}5.76 & 8.35 / \textbf{4.42} (0.53) & 3.37 / \textbf{2.20} (0.65) & 1.09 / \textbf{0.82} (0.75) & 0.45 / \textbf{0.37} (0.81) \\
Crank & \phantom{0}5.70 & \textbf{20.8} / 23.6 (1.14) & 4.62 / \textbf{4.28} (0.93) & 4.99 / \textbf{4.15} (0.83) & 1.12 / \textbf{0.82} (0.74) \\
\bottomrule
\end{tabular}
}
\vspace{-1.00em}
\end{table}

\subsection{Analysis of Constraint Estimation}
\label{sec:exp_analysis}

We analyze the constraint estimation of Section~\ref{sec:constraint_estimation}, using data from the four simulated objects of Section~\ref{sec:exp_sim}: how its accuracy depends on the ensemble size, whether the diversity of the samples matters, and how robust it is to noise.

\paragraph{Ensemble size and runtime.}
We re-estimate from $N$-sample subsets of each $128$-sample ensemble (Appendix~\ref{app:sim_ensemble_size}).
With the subspace dimension fixed to its true value, the 95th-percentile SE pooled over the objects decreases as $N$ grows: it is $78.9/23.1/8.59/7.23^\circ$ at $N{=}1/2/4/8$ and $6.61^\circ$ at $N{=}128$.
Dimension selection also needs a large enough ensemble: a single threshold setting selects the true dimension on every trial only for $N\ge64$.
Sampling $N{=}128$ trajectories in one batched pass with $10$ DDIM steps and building the gains takes a median of $70.1$~ms and at most $76.5$~ms on an RTX~5070~Ti; this cost is incurred once per trial, at grasp closure, and fits within one action period of the $10$~Hz policy.

\paragraph{Sample diversity.}
We next ask whether the diversity of the sampled trajectories, and not only their number, determines the estimation accuracy.
We measure the diversity of a set of trajectories by their mean pairwise dynamic time warping (DTW) distance, and re-estimate from the $k$ samples of each $128$-sample ensemble that maximize or minimize it (Appendix~\ref{app:sim_diversity}).
Averaged over the objects (mean of per-object median SE), the full ensemble gives an SE of $2.34^\circ$; at $k{=}2/4/8$, the most diverse subsets give $5.91/3.32/2.74^\circ$, with SE above $10^\circ$ on $16/9/3\%$ of grasps, against $19.7/9.9/4.4^\circ$ and $41/28/14\%$ for the least diverse ones.
At equal size, diverse samples thus give a much more accurate estimate: pose-level diversity of the sampled trajectories provides the twist-space coverage that the estimation needs.
The gap narrows as $k$ grows ($2.61/2.48/2.49^\circ$ against $3.43/2.58/2.43^\circ$ at $k{=}16/32/64$), so increasing $N$ remains a simple safeguard against finite-sample under-coverage.

\paragraph{Robustness to noise.}
We test the estimation against noise in the training data, in the observations, and in the policy inference, and report the pooled median SE at $N{=}128$ (Appendix~\ref{app:sim_noise}).
(i)~\emph{Training data.}
We add noise to the actions of the scripted demonstrations in position ($5$~mm) or orientation ($5^\circ$) and train a state-only policy on them.
SE is then $3.1^\circ$ and $10.6^\circ$, respectively, against $12.0^\circ$ and $16.0^\circ$ when the same noise is added directly to the trajectories sampled from the original policy: the policy trained on noisy demonstrations attenuates the noise instead of passing it on to the estimate.
(ii)~\emph{Observations.}
We perturb the image observation of the original policy at inference with Gaussian noise ($\sigma{=}32/255$), a Gaussian blur ($\sigma{=}2$~px), or a random shift ($\sigma{=}4$~px), and estimate from the resulting ensemble.
SE stays within $0.11^\circ$ of the clean $2.10^\circ$.
(iii)~\emph{Inference.}
We reduce the number of DDIM denoising steps of the original policy, which lowers the accuracy of the sampled trajectories.
From $100$ to $4$ steps, SE changes only from $2.16^\circ$ to $2.26^\circ$.

\section{Conclusion}
\label{sec:conclusion}

In this paper, we introduce a constraint-decomposed impedance controller for policy-driven articulated object manipulation.
The key idea is to use the trajectory ensemble of a generative policy to estimate the feasible motion structure of the object in body twist space, and then use this structure to construct impedance gains that are stiff along feasible directions and compliant along constrained directions.
Our method does not require a separate perception module, physical probing, or prior knowledge of the object kinematic model, and the impedance decomposition is obtained from the motions generated by the policy, rather than specified in advance.
In simulation, the estimated subspaces closely match the analytic reference subspaces, and the resulting gains reduce constraint-violating forces compared with isotropic impedance control while maintaining task success.
In real-world experiments, the same trend appears with clear reductions in both force and moment loads.

\paragraph{Limitations and future work.}
The method relies on trajectory samples that sufficiently cover the feasible motion directions of the object.
This may fail when the demonstration data itself does not contain enough motion diversity, for example when only one way of moving a multi-DoF object is shown.
Even when the demonstrations are diverse, the learned policy may still collapse to a small set of modes and fail to reproduce the full feasible structure.
Our analysis shows that larger and more diverse ensembles lower the subspace error, but a feasible direction that the demonstrations barely exercise stays invisible at any ensemble size; contact wrenches could reveal it, since an ideal constraint reaction is reciprocal to every feasible twist.
For configuration-dependent objects, the fixed envelope subspace still provides a practical approximation, but it is less precise than the instantaneous feasible subspace, which explains the smaller improvement on the crank.
Combining the envelope with locally re-sampled short-horizon ensembles to track the moving feasible subspace is a natural next step.
Our gain construction should be viewed as fixed-projector impedance shaping under a nominal inertia metric, rather than full inertial decoupling of the robot-object system; extending it to smooth online projector updates, discontinuous contact transitions, and tactile/force-based refinement during execution while preserving passivity remains future work.

\acknowledgments{This work was supported in part by IITP-MSIT under Grant \mbox{2022-220480} and Grant \mbox{RS-2022-II220480} (Training and Inference Methods for Goal Oriented AI Agents), in part by IITP-MSIT under Grant \mbox{RS-2024-00436680} (Collaborative Research Projects with Microsoft Research) through Global Research Support Program in the Digital Field program, in part by KIAT under Grant \mbox{P0020536} (HRD Program for Industrial Innovation), in part by MOTIR under Grant \mbox{RS-2025-25462891}, in part by MOTIR under Grant \mbox{RS-2025-25460896}, in part by SNU-IPAI, in part by SNU-AIIS, in part by SNU-IAMD, in part by SNU BK21+ Program in Mechanical Engineering, in part by SNU Institute for Engineering Research, in part by Hyundai Motor Company and Kia, and in part by Microsoft Research Asia.}

\bibliography{references}

\newpage

\appendix
\newtheorem{proposition}{Proposition}[section]
\newtheorem{lemma}[proposition]{Lemma}
\newtheorem{corollary}[proposition]{Corollary}
\newenvironment{proof}{\par\smallskip\noindent\textit{Proof.}\hspace*{0.5em}}{\nobreak\hfill$\square$\par\smallskip}

\section{Operational-Space Dynamics and Block-Isotropic Inertia}
\label{app:dynamics}

This appendix derives the commanded wrench and its feedforward/feedback split (Appendix~\ref{app:e_dynamics}), the block-isotropic inertia surrogate $\bar\Lambda$ that sets the metric for our projectors and gains (Appendix~\ref{app:normalized_block_iso}), and the body-frame tracking errors $(e_T,e_V)$ entering the feedback wrench (Appendix~\ref{app:tracking_errors}).
Together they back Sections~\ref{sec:bg_impedance} and~\ref{sec:gain_decomposition} and supply the plant for the passivity argument of Appendix~\ref{app:e_passivity}.

\subsection{Commanded Wrench from the Virtual Impedance Model}
\label{app:e_dynamics}

We control the grasped end-effector in its body frame $\{b\}$ with body twist $V_b=(\omega,v)\in\se(3)\cong\Rbb^6$.
In body coordinates the operational-space dynamics read
\begin{equation}
    \Lambda_b\, \dot V_b + C_b\, V_b + g_b = F_{\rm cmd} - F_{\rm ext},
    \label{eq:appA:osc_dynamics}
\end{equation}
with operational-space inertia $\Lambda_b(q)$, Coriolis/centrifugal $C_b(V_b,q)$, gravity $g_b(q)$, commanded wrench $F_{\rm cmd}$ realized by $\tau=J^\top F_{\rm cmd}$ (cf.\ Eq.~\eqref{eq:bg_commanded_wrench}), and contact wrench $F_{\rm ext}$ expressed in $\{b\}$.
We suppress the configuration argument below and restore $\Lambda_b(q)$ only where its dependence is essential (Appendix~\ref{app:normalized_block_iso}).

The virtual impedance target of Section~\ref{sec:bg_impedance} is
\begin{equation}
    M_v \dot e_V + D_v e_V + K_v e_T = F_{\rm ext},
    \label{eq:appA:target_impedance}
\end{equation}
with the body-frame errors $e_T,e_V$ of Appendix~\ref{app:tracking_errors}.
The desired body twist $V_d$ is represented in the desired frame $\{d\}$; the adjoint transports it into $\{b\}$,
\begin{equation}
    V_{\rm ref,b}:=\Ad_{T_b^{-1}T_d}V_d,
    \qquad
    a_{\rm ref,b}:=\dot V_{\rm ref,b}
    =\tfrac{d}{dt}\!\left(\Ad_{T_b^{-1}T_d}V_d\right),
    \label{eq:appA:ref_accel}
\end{equation}
so $a_{\rm ref,b}$ includes the derivative of the adjoint in addition to the transported desired acceleration.
With $e_V=V_{\rm ref,b}-V_b$ (hence $\dot e_V=a_{\rm ref,b}-\dot V_b$) and the matched virtual inertia $M_v=\Lambda_b$, the target~\eqref{eq:appA:target_impedance} becomes
\begin{equation}
    \Lambda_b \dot V_b
    = \Lambda_b a_{\rm ref,b} + D_v e_V + K_v e_T - F_{\rm ext},
    \label{eq:appA:desired_accel}
\end{equation}
while the plant~\eqref{eq:appA:osc_dynamics} gives $\Lambda_b \dot V_b = F_{\rm cmd} - F_{\rm ext} - C_b V_b - g_b$.
Equating the two yields the commanded wrench in the nominal model,
\begin{equation}
    F_{\rm cmd}
    =
    \Lambda_b a_{\rm ref,b} + C_b V_b + g_b + K_v e_T + D_v e_V .
    \label{eq:appA:Fcmd_full}
\end{equation}
A model-based implementation replaces the true plant terms by model estimates $\hat\Lambda_b,\hat C_b,\hat g_b$ and groups the result into a model-based feedforward and an impedance feedback,
\begin{equation}
    F_{\rm ff} = \hat\Lambda_b\, a_{\rm ref,b} + \hat C_b\, V_b + \hat g_b ,
    \qquad
    F_{\rm fb} = K_v e_T + D_v e_V,
    \qquad
    F_{\rm cmd}=F_{\rm ff}+F_{\rm fb} .
    \label{eq:appA:Fff}
\end{equation}
With exact model matching, substituting~\eqref{eq:appA:Fff} into~\eqref{eq:appA:osc_dynamics} recovers the target~\eqref{eq:appA:target_impedance}.
Because $M_v=\Lambda_b$ cancels $F_{\rm ext}$ between target and plant, $F_{\rm cmd}$ carries \emph{no} contact-wrench term: the policy, subspace estimator, and impedance feedback use no force/torque feedback, which is used only for safety monitoring and evaluation (Appendix~\ref{app:impl}).

\subsection{Block-Isotropic and Normalized Inertia}
\label{app:normalized_block_iso}

Appendix~\ref{app:e_dynamics} fixed $M_v=\Lambda_b$ but left the gain structure open.
The ideal configuration-dependent controller factors stiffness and damping through the same operational-space inertia,
\begin{equation}
    K_v(q)=\Lambda_b(q)K_{\rm diag},
    \qquad
    D_v(q)=\Lambda_b(q)D_{\rm diag},
    \label{eq:appA:khatib_gains}
\end{equation}
with diagonal positive $K_{\rm diag},D_{\rm diag}$, reducing the target to the inertia-normalized error dynamics
\begin{equation}
    \dot e_V+D_{\rm diag}e_V+K_{\rm diag}e_T=\Lambda_b(q)^{-1}F_{\rm ext}
    \label{eq:appA:khatib_error}
\end{equation}
of the classical operational-space control construction of Khatib, with the wrench entering through the mobility $\Lambda_b(q)^{-1}$.
This is why our gains carry a leading inertia factor: $K_v,D_v$ are wrench-level maps that must be consistent with the metric measuring twist errors.

Many implementations instead keep the gains fixed, e.g.\ a constant block with isotropic rotational/translational sub-blocks,
\begin{equation}
    K_v=\operatorname{blkdiag}(k_{\rm rot}I_3,\ k_{\rm trans}I_3),
    \qquad
    D_v=\operatorname{blkdiag}(d_{\rm rot}I_3,\ d_{\rm trans}I_3),
    \label{eq:appA:fixed_block_gain}
\end{equation}
robust, but unable to decide which task directions should be stiff (driving) and which compliant (constrained).
We keep the fixed-gain philosophy but make the inertia weighting explicit.
Using the configuration-dependent $\Lambda_b(q)$ as the projector metric would tie the feasible/constrained split to an online inertia estimate, defeating the single-shot design, so we replace it by a constant block-isotropic surrogate
\begin{equation}
    \bar\Lambda = \operatorname{blkdiag}(\bar I\, I_3,\ \bar m\, I_3) \succ 0 ,
    \label{eq:appA:block_iso}
\end{equation}
with single rotational and translational scales $\bar I,\bar m>0$.
The gains of Eq.~\eqref{eq:decomposed_gains} are then fixed during execution yet retain the rotational-to-translational ratio through $\bar\Lambda$.
Two distinctions matter here: $\bar\Lambda$ is \emph{neither} the feedforward inertia $\hat\Lambda_b$ of Appendix~\ref{app:e_dynamics} (which compensates the plant), nor the PCA metric $G=\operatorname{blkdiag}(\alpha^2I_3,I_3)$ of Appendix~\ref{app:uncentered} (a data-driven scaling on sampled twists).
It is the inertia metric linking the twist split to the wrench split (Appendices~\ref{app:energy_decomp},~\ref{app:closed_loop},~\ref{app:gain_parity}).

\paragraph{Computing $\bar\Lambda$.}
All quantities are in the control body frame $\{b\}$, so $\Lambda_b$ is referred to the control-frame origin.
We partition $\Lambda_b=\left[\begin{smallmatrix}\Lambda_{\omega\omega}&\Lambda_{\omega v}\\\Lambda_{v\omega}&\Lambda_{vv}\end{smallmatrix}\right]$ and collapse each diagonal block to its isotropic trace average,
\begin{equation}
    \bar I := \tfrac{1}{3}\operatorname{tr}(\Lambda_{\omega\omega}),
    \qquad
    \bar m := \tfrac{1}{3}\operatorname{tr}(\Lambda_{vv}),
    \qquad
    \ell_\Lambda := \sqrt{\bar I/\bar m},
    \label{eq:appA:lambda_trace_average}
\end{equation}
discarding within-block anisotropy and angular--linear coupling from the projector metric.
Their effect stays in the true closed-loop dynamics, and we do not claim exact decoupling.
In simulation $\Lambda_b$ is the free gripper's spatial inertia about the control point; on FR3 it is $\Lambda_b(q_{\rm nominal})=(J_bM^{-1}J_b^\top)^{-1}$ at the nominal pose with payload.
The per-platform snapshots (not universal constants) are reported in Appendix~\ref{app:numerical_gains}.

\paragraph{Scale normalization.}
Writing $\bar\Lambda=\bar m\,\operatorname{blkdiag}(\ell_\Lambda^2 I_3,I_3)$, the absolute scale $\bar m$ is only an overall factor: $\bar\Lambda\mapsto c\bar\Lambda$ ($c>0$) leaves the projectors of Eq.~\eqref{eq:lambda_projectors} unchanged and multiplies the gains of Eq.~\eqref{eq:decomposed_gains} by $c$ (undone by rescaling $k_i,d_i$ by $1/c$).
Only the length ratio $\ell_\Lambda$ fixes the projector geometry; $\bar m$ is absorbed into the scalar gains, reported with $\bar I$ in Appendix~\ref{app:numerical_gains}.
In all experiments, the projectors and gains use the normalized metric $(m_{\rm ref}/\bar m)\,\bar\Lambda=m_{\rm ref}\operatorname{blkdiag}(\ell_\Lambda^2 I_3,I_3)$ with $m_{\rm ref}=1$~kg, and the scalar gains of Appendix~\ref{app:numerical_gains} refer to this metric.

\subsection{Body-Frame Tracking Error Definitions}
\label{app:tracking_errors}

Since the decomposed gains act in body-twist coordinates, the tracking errors are defined in the same coordinates.
We use the body-frame group error of Bullo and Murray~\cite{BulloMurray1995, BulloMurray1999}---a double-geodesic pose error with a compatible adjoint-transported twist error.
With our reference-minus-current convention, Eq.~\eqref{eq:tracking_errors} is
\begin{equation}
    e_T =
    \begin{bmatrix}
    \log(R_b^\top R_d)^\vee \\[2pt]
    R_b^\top(p_d-p_b)
    \end{bmatrix},
    \qquad
    e_V = \Ad_{T_b^{-1}T_d}\, V_d - V_b ,
    \label{eq:appA:tracking_errors}
\end{equation}
with $T_b=(R_b,p_b)$, $T_d=(R_d,p_d)$, and desired body twist $V_d$.
The pose coordinate is ``double-geodesic'' rather than the full $\SE(3)$ logarithm: a geodesic $\SO(3)$ axis-angle block $\log(R_b^\top R_d)^\vee$ and a Euclidean displacement $R_b^\top(p_d-p_b)$ resolved in the current body frame, so both blocks share the body frame of the sampled twists, the projectors of Eq.~\eqref{eq:lambda_projectors}, and the gains of Eq.~\eqref{eq:decomposed_gains}~\cite{lynch2017modern}.
The twist error transports $V_d$ by the adjoint before subtraction (Appendix~\ref{app:e_dynamics}); the reference-minus-current sign matches Eq.~\eqref{eq:bg_commanded_wrench}, so a positive error produces the restoring wrench $F_{\rm fb}=K_v e_T+D_v e_V$.

For small errors this reduces to the standard impedance law: with $T_b^{-1}T_d=\exp(\xi^\wedge)$ and $\xi$ small, $e_T=\xi+O(\|\xi\|^2)$ and $\dot e_T=e_V+O(\|e_T\|\,\|e_V\|)$, so the feedback realizes Eq.~\eqref{eq:bg_target_impedance} with $M_v=\Lambda_b$.
Appendix~\ref{app:closed_loop} develops the exact local error kinematics and the frozen first-order conditions supporting the passivity certificate.

\section{Configuration Invariance from Lie Subalgebra}
\label{app:lie_subalgebra}

This appendix supports the kinematic classification of Section~\ref{sec:bg_kinematics}: a sufficient condition for the feasible body-twist subspace $\mathcal S(q)$ to be configuration-independent, and its verification for the common joint primitives.
These are reference calculations, not controller inputs: the estimator of Section~\ref{sec:constraint_estimation} observes sampled twists without assuming DoF, joint types, screw axes, or an articulation model.
Configuration-dependent mechanisms (crank, universal joint) are treated in Appendix~\ref{app:envelope_examples}.

Recall that $\mathcal S(q)=\operatorname{span}\{B_1(q),\ldots,B_k(q)\}\subset\se(3)$, Eq.~\eqref{eq:body_twist_kinematics}, under the identification $\se(3)\cong\Rbb^6$ via $V=(\omega,v)$; we take the $\se(3)$ view because the argument uses the bracket structure.
A subspace is a \emph{Lie subalgebra} iff it is closed under the Lie bracket, and by bilinearity it suffices to check the generators, $[B_i,B_j]\in\mathcal S$.
For $\se(3)$ the bracket of $V_1=(\omega_1,v_1)$ and $V_2=(\omega_2,v_2)$ is~\cite{lynch2017modern}
\begin{equation}
    [V_1, V_2] =
    \begin{bmatrix}
        \omega_1 \times \omega_2 \\
        \omega_1 \times v_2 - \omega_2 \times v_1
    \end{bmatrix}.
    \label{eq:appB:se3_bracket}
\end{equation}

\begin{proposition}[Lie-subalgebra condition]
\label{prop:appB:lie_invariance}
Let $\{B_1,\ldots,B_k\}$ be the home body screw axes of a serial open-chain product-of-exponentials model, and let $\mathcal S=\operatorname{span}\{B_1,\ldots,B_k\}$.
If $\mathcal S$ is a Lie subalgebra of $\se(3)$, then the body-Jacobian columns satisfy $B_i(q)\in\mathcal S$ for every $q$ in the joint-coordinate domain.
Hence $\mathcal S(q)\subseteq\mathcal S$ for all such $q$.
If, over the configuration range of interest, the body Jacobian has rank $\dim\mathcal S$, then $\mathcal S(q)=\mathcal S$ throughout that range; the object is configuration-independent on that regular set.
\end{proposition}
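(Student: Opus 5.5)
The plan is to use the product-of-exponentials body Jacobian formula together with the fact that a subalgebra is invariant under the adjoint action of its own exponential subgroup. In the body form $T_{sb}(q)=M e^{[B_1]q_1}\cdots e^{[B_k]q_k}$ (Lynch--Park), the body Jacobian columns are
\begin{equation}
B_k(q)=B_k,\qquad B_i(q)=\Ad_{e^{-[B_k]q_k}\cdots e^{-[B_{i+1}]q_{i+1}}}\,B_i,\quad i<k.
\end{equation}
So it suffices to show that $\Ad_{e^{-[X]t}}Y\in\mathcal S$ whenever $X,Y\in\mathcal S$ and $t\in\Rbb$. Iterating this over the factors $B_k,\ldots,B_{i+1}$, each of which lies in $\mathcal S$, then gives $B_i(q)\in\mathcal S$ for every $q$.

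For the key step I would use the identity $\Ad_{e^{[X]t}}=e^{t\,\mathrm{ad}_X}$, where $\mathrm{ad}_X Y=[X,Y]$ is the bracket of Eq.~\eqref{eq:appB:se3_bracket}. This identity holds because both sides solve the linear ODE $\frac{d}{dt}\Phi(t)=\mathrm{ad}_X\Phi(t)$ with $\Phi(0)=I$. The derivative of $\Ad_{e^{[X]t}}Y$ is $e^{[X]t}[X,Y]e^{-[X]t}$ in matrix form, which equals $\mathrm{ad}_X$ applied to $\Ad_{e^{[X]t}}Y$.

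Because $\mathcal S$ is closed under the bracket, $\mathrm{ad}_X$ maps $\mathcal S$ into itself for $X\in\mathcal S$. Hence every partial sum of $\sum_n t^n\mathrm{ad}_X^n Y/n!$ lies in $\mathcal S$. Since $\mathcal S$ is a finite-dimensional, hence closed, subspace of $\Rbb^6$, the limit $e^{t\,\mathrm{ad}_X}Y$ also lies in $\mathcal S$. A more elementary alternative is to note that $\mathcal S$ is an invariant subspace of the linear operator $\mathrm{ad}_X$, so it is invariant under any power series in that operator. Applying this with $t=-q_j$ for each factor yields $B_i(q)\in\mathcal S$, and therefore $\mathcal S(q)=\operatorname{span}\{B_i(q)\}\subseteq\mathcal S$.

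For the final claim, $\mathcal S(q)$ is the column space of $J_b(q)$, so its dimension equals $\operatorname{rank}J_b(q)$. If this rank equals $\dim\mathcal S$, then the inclusion $\mathcal S(q)\subseteq\mathcal S$ between subspaces of equal dimension is an equality. Thus $\mathcal S(q)=\mathcal S$ on the regular set, and the object is configuration-independent there.

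The main point needing care is the adjoint--exponential identity and the sign and ordering convention in the body Jacobian. However, only membership in $\mathcal S$ matters, and $\mathcal S$ is invariant under both $\Ad_{e^{\pm[X]t}}$, so the convention does not affect the conclusion.
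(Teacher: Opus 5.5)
Your proposal is correct and follows the paper's proof essentially step for step: you write each body-Jacobian column as an adjoint transport of a home screw by the distal exponentials, expand $\Ad_{e^{[X]t}}=e^{t\,\mathrm{ad}_X}$ as a series of iterated brackets that stays in the bracket-closed $\mathcal S$, compose over the distal factors, and conclude $\mathcal S(q)=\mathcal S$ from the rank condition. Two points the paper uses only implicitly, you make explicit: the ODE justification of the adjoint--exponential identity, and the remark that a finite-dimensional subspace is closed, so the limit of the series stays in $\mathcal S$.
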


\begin{proof}
The proof has two steps.
First, each body-Jacobian column at configuration $q$ is an adjoint image of a home screw axis.
Second, because adjoint transport by an element generated inside a Lie subalgebra preserves that subalgebra, every transported column remains inside $\mathcal S$.

Each body-Jacobian column is the corresponding home screw transported by the more distal joint motions, so the columns at $q=(q_1,\ldots,q_k)$ are~\cite{lynch2017modern}
\begin{equation}
    B_i(q) = \Ad_{e^{-B_k^\wedge q_k} \cdots e^{-B_{i+1}^\wedge q_{i+1}}}\, B_i,
    \qquad B_k(q)=B_k.
    \label{eq:appB:body_jacobian}
\end{equation}
The Lie-group identity $\Ad_{e^{X^\wedge}} V = e^{\operatorname{ad}_X} V = V+[X,V]+\tfrac{1}{2!}[X,[X,V]]+\cdots$ (with $\operatorname{ad}_X V=[X,V]$) expands the adjoint as a series of iterated brackets.
Setting $X=-B_j q_j\in\mathcal S$ (as $\mathcal S$ is a vector space) for a single distal factor and taking any $V\in\mathcal S$, every term is an iterated bracket of elements of $\mathcal S$.
Because $\mathcal S$ is closed under the bracket, every term lies in $\mathcal S$, hence so does the sum:
\[
    \Ad_{e^{-B_j^\wedge q_j}} V \in \mathcal S
    \qquad\text{for all } V\in\mathcal S.
    \qquad(\mathcal S\text{ bracket-closed})
\]
Because $\Ad_{AB}=\Ad_A\Ad_B$, the transport in Eq.~\eqref{eq:appB:body_jacobian} is a product of such factors, and applying the displayed fact to each factor in turn yields $B_i(q)\in\mathcal S$ for all $i$ and all $q$ in the joint-coordinate domain.
Therefore $\mathcal S(q)=\operatorname{span}\{B_i(q)\}_{i=1}^{k}\subseteq\mathcal S$.
On any regular configuration range where $\operatorname{rank}J_b(q)=\dim\mathcal S$, the subspace $\mathcal S(q)$ has the same dimension as $\mathcal S$, so the inclusion is an equality.
At singular configurations the inclusion still holds, but the instantaneous feasible subspace may drop rank.
\end{proof}

We verify the condition for each configuration-independent primitive used in the paper, treating each as the chain of its generating body screws (Table~\ref{tab:appB:primitives}).
All screws are in the home body frame; changing that frame applies a fixed $\SE(3)$ adjoint to every generator and hence to the span, and the rank condition of Proposition~\ref{prop:appB:lie_invariance} holds away from the joint chart's singularities.
The $1$-DoF cases (revolute, prismatic, helical) and the cylindrical pair are Abelian: a $1$-D span self-commutes, the cylindrical generators share an axis ($[(\hat n;-\hat n\times p),(0;\hat n)]=0$), and pure translation of any dimension has zero angular part.
The only nontrivial brackets are rotational, illustrated by the spherical joint.

\begin{table}[H]
\centering
\caption{\textbf{Configuration-independent primitives and their Lie-subalgebra verification.}
Body screws $B=(\omega;v)$ in the home frame: unit axes $\hat n,\hat d$, axis point $p$, helical pitch $h$, in-plane unit vectors $\hat t_1,\hat t_2\perp\hat n$.
Every generator bracket lies in the span, so each is a Lie subalgebra and, by Proposition~\ref{prop:appB:lie_invariance}, configuration-independent on its regular set.}
\label{tab:appB:primitives}
\vspace{0.25em}
\small
\setlength{\tabcolsep}{6pt}
\begin{tabular}{l c l l}
\toprule
Primitive & DoF & Generators & Structure \\
\midrule
Revolute (R)    & $1$ & $(\hat n;\,-\hat n\times p)$ & Abelian \\
Prismatic (P)   & $1$ & $(0;\,\hat d)$ & Abelian (any dim) \\
Helical (H)     & $1$ & $(\hat n;\,h\hat n-\hat n\times p)$ & Abelian \\
Cylindrical (C) & $2$ & $(\hat n;\,-\hat n\times p),\ (0;\,\hat n)$ & Abelian \\
Spherical (S)   & $3$ & $(\hat e_i;\,-\hat e_i\times p)_{i=1,2,3}$ & $\so(3)$ \\
Planar (E)      & $3$ & $(\hat n;\,0),\ (0;\,\hat t_1),\ (0;\,\hat t_2)$ & $\se(2)$ \\
\bottomrule
\end{tabular}
\end{table}

\paragraph{Spherical (S), worked.}
For three rotations $B_i=(\hat e_i;-\hat e_i\times p)$ about a common point $p$, the rotational block of the bracket is $\hat e_1\times\hat e_2=\hat e_3$ and the triple product $a\times(b\times c)=b(a\cdot c)-c(a\cdot b)$ collapses the translational block to $-\hat e_3\times p$:
\begin{equation}
    [B_1, B_2]
    =
    \begin{bmatrix}
        \hat e_1\times\hat e_2 \\
        \hat e_1\times(-\hat e_2\times p)-\hat e_2\times(-\hat e_1\times p)
    \end{bmatrix}
    =
    \begin{bmatrix}
        \hat e_3 \\
        -\hat e_2 p_1+\hat e_1 p_2
    \end{bmatrix}
    =
    \begin{bmatrix}
        \hat e_3 \\
        -\hat e_3\times p
    \end{bmatrix}
    = B_3 \in \mathcal S_S .
    \label{eq:appB:sph_bracket}
\end{equation}
By cyclic symmetry $[B_2,B_3]=B_1$ and $[B_3,B_1]=B_2$, so $\mathcal S_S\cong\so(3)$.
The planar joint closes analogously, $[B_1,B_2]=\pm B_3$ (sign fixed by the in-plane frame orientation) and $[B_2,B_3]=0$, giving $\mathcal S_E\cong\se(2)$; shifting the origin by $p$ only adds the in-plane translation $-\hat n\times p$, a basis change within the same span.

By contrast, multi-joint mechanisms whose body-Jacobian span is not bracket-closed, or whose adjoint action carries a joint screw outside the initial span, need not be configuration-independent; Appendix~\ref{app:envelope_examples} works through such cases.

\section{Configuration Dependence and the Global Envelope}
\label{app:envelope_examples}

When $\mathcal S(q)$ varies with $q$, the fixed target is the global envelope $\mathcal S_{\rm env}$ of Eq.~\eqref{eq:subspace_envelope}.
We compute it for three nondegenerate $2$R linkages---the simulation universal joint (Section~\ref{sec:exp_sim}) and the coffee-grinder crank of Figure~\ref{fig:articulated_examples}(b) among them---collected in Table~\ref{tab:appC:envelope_examples}.
As in Appendix~\ref{app:lie_subalgebra}, these subspaces are reference targets only.

We use a distal frame $\{c\}$ with origin on the distal joint axis, so $B_2$ has zero translational part and stays constant while the proximal screw $B_1(q_2)=\Ad_{e^{-B_2^\wedge q_2}}B_1$ varies with the distal angle~\cite{lynch2017modern}; a displaced gripper frame $\{b\}$ applies the fixed adjoint $\Ad_{T_{bc}}$ ($T_{bc}$ the pose of $\{c\}$ in $\{b\}$) to every screw and to $\mathcal S_{\rm env}$.
At each fixed $q$ the local subspace $\mathcal S(q)=\operatorname{span}\{B_1(q_2),B_2\}$ is $2$-dimensional, but sweeping $q_2$ over an interval enlarges $\mathcal S_{\rm env}$; the dimension claims assume distinct axes, $L>0$ where an offset is present, and a distal range with nonempty interior.
We abbreviate $\omega_i=(\hat e_i;0)$, $v_i=(0;\hat e_i)$.

\paragraph{U-joint (worked example).}
Two perpendicular axes meet at a point; placing the $\{c\}$-origin at the intersection eliminates every translational component, so the envelope is purely rotational:
\begin{equation}
    B_2 = (\hat e_x;\, 0), \qquad
    B_1(q_2) = \big(R_x(-q_2)\,\hat e_y;\; 0\big),
    \qquad
    \mathcal S_{\rm env} = \operatorname{span}\{\omega_x,\,\omega_y,\,\omega_z\}.
    \label{eq:appC:ujoint}
\end{equation}
Here $\dim\mathcal S_{\rm env}=3$ while $\dim\mathcal S(q)=2$ at each fixed $q$: as $q_2$ sweeps, $R_x(-q_2)\hat e_y$ traces the $\omega_y$--$\omega_z$ plane, which with the constant $\omega_x$ of $B_2$ fills the full rotation space.
This is the universal-joint object of our simulation study (Section~\ref{sec:exp_sim}): local feasible dimension $2$ at every configuration, with a fixed $3$-dimensional global envelope as the target our method estimates.

\paragraph{Parallel-offset crank (worked example).}
Both axes are along $\hat z$; with the $\{c\}$-origin on the handle axis and the proximal axis offset by $L\hat e_x$ at $q_2=0$,
\begin{equation}
    B_2 = (\hat e_z;\, 0), \qquad
    B_1(q_2) = \big(\hat e_z;\; -L\sin q_2\,\hat e_x - L\cos q_2\,\hat e_y\big),
    \label{eq:appC:crank_screws}
\end{equation}
the difference $B_1(q_2)-B_2$ is a pure translation sweeping the $\hat x$--$\hat y$ plane, so
\begin{equation}
    \mathcal S_{\rm env} = \operatorname{span}\{\omega_z,\,v_x,\,v_y\},
    \qquad \dim\mathcal S_{\rm env} = 3 \quad (\dim\mathcal S(q)=2).
    \label{eq:appC:crank_env}
\end{equation}
Setting $L=l$ recovers the body screws of Section~\ref{sec:bg_kinematics}, $B_1(q)=(0,0,1,-l\sin q_2,-l\cos q_2,0)^\top$ and $B_2=(0,0,1,0,0,0)^\top$: $\mathcal S(q)$ rotates within the fixed plane $\mathcal S_{\rm env}$, as Figure~\ref{fig:articulated_examples}(b) shows.
The door-with-lever (off-axis orthogonal, $\dim\mathcal S_{\rm env}=4$, not an experimental object here) follows the same construction; Table~\ref{tab:appC:envelope_examples} collects all three mechanisms' screws and envelopes.

\begin{table}[H]
\centering
\caption{\textbf{Global envelope for the three nondegenerate configuration-dependent $2$R examples.}
Screws are expressed in the distal frame $\{c\}$ with origin on the distal joint axis, so $B_2$ is constant and $B_1(q_2)=\Ad_{e^{-B_2^\wedge q_2}}B_1$.
At every fixed $q$ the local feasible subspace has $\dim\mathcal S(q)=2$; sweeping $q_2$ over an interval with nonempty interior enlarges the envelope to the listed dimension.
Shorthand: $\omega_i=(\hat e_i;0)$, $v_i=(0;\hat e_i)$; ``$\geq k$ phases'' means $k$ distinct distal angles $q_2$.}
\label{tab:appC:envelope_examples}
\vspace{0.25em}
\small
\setlength{\tabcolsep}{5pt}
\resizebox{\linewidth}{!}{%
\begin{tabular}{l c c c c l}
\toprule
Mechanism & $B_2$ & $B_1(q_2)$ & $\mathcal S_{\rm env}$ & $\dim\mathcal S_{\rm env}$ & Excitation for full $\mathcal S_{\rm env}$ \\
\midrule
U-joint (intersecting 2R)
  & $(\hat e_x;0)$ & $(R_x(-q_2)\hat e_y;\,0)$
  & $\operatorname{span}\{\omega_x,\omega_y,\omega_z\}$ & $3$
  & $B_2$ excited, $\geq2$ phases \\
Parallel-offset crank (parallel 2R)
  & $(\hat e_z;0)$ & $(\hat e_z;\,-L\sin q_2\,\hat e_x-L\cos q_2\,\hat e_y)$
  & $\operatorname{span}\{\omega_z,v_x,v_y\}$ & $3$
  & $\geq3$ $B_1$ phases ($B_2$ redundant) \\
Door hinge $+$ lever (off-axis orthogonal 2R)
  & $(\hat e_x;0)$ & $(R_x(-q_2)\hat e_z;\,-L\,\hat e_x)$
  & $\operatorname{span}\{\omega_x,\omega_y,\omega_z,v_x\}$ & $4$
  & $B_2$ excited, $\geq3$ phases \\
\bottomrule
\end{tabular}%
}
\end{table}

\paragraph{Excitation.}
Each $B_1(q_2)$ depends on $q_2$ only through $\{1,\sin q_2,\cos q_2\}$, independent on any interval, so the continuous sweep gives $\operatorname{span}(\{B_2\}\cup\{B_1(q_2):q_2\in I\})=\mathcal S_{\rm env}$ and a finite ensemble $V_j=\dot q_{1j}B_1(q_{2j})+\dot q_{2j}B_2$ recovers it once $\operatorname{rank}[V_1,\ldots,V_M]=\dim\mathcal S_{\rm env}$.
The last column of Table~\ref{tab:appC:envelope_examples} states the requirement: excite $B_2$ where it adds a direction $B_1(q_2)$ never reaches (the constant $\omega_x$ for U-joint and door; redundant for the crank), and visit enough distinct distal phases to span the swept block.
A narrow configuration range or a near-one-dimensional joint-velocity pattern collapses the empirical span---the configuration-dependent counterpart of the sufficient condition of Appendix~\ref{app:lie_subalgebra}.

\paragraph{Feasibility versus envelope membership.}
For a configuration-independent object $\mathcal S(q)=\mathcal S_{\rm env}$ throughout the regular range, so $V_b$ is feasible \emph{iff} $V_b\in\mathcal S_{\rm env}$.
For the configuration-dependent examples feasibility is strictly stronger: $V_b$ feasible $\Rightarrow V_b\in\mathcal S(q)\subsetneq\mathcal S_{\rm env}$, but not conversely---for the crank both $v_x,v_y\in\mathcal S_{\rm env}$ while only one in-plane direction is instantaneously feasible.
Thus $\mathcal S_{\rm env}$ is a \emph{conservative outer bound}: it never excludes a feasible direction but can include momentarily infeasible ones, so treating the whole envelope as stiff leaves residual stiffness along instantaneously constrained directions.
This is one kinematic reason for the smaller improvement on the crank (Section~\ref{sec:exp_real}).
In both classes $\mathcal S_{\rm env}$ is a fixed target, motivating its direct estimation from sampled twists (Appendix~\ref{app:uncentered}).

\section{Trajectory-Ensemble Subspace Estimation in the PCA Metric}
\label{app:uncentered}

Appendix~\ref{app:envelope_examples} established $\mathcal S_{\rm env}$ as a fixed linear subspace of $\se(3)$ through the zero twist.
We estimate it from a finite, noisy ensemble of body twists by the weighted PCA of Section~\ref{sec:constraint_estimation}.
Estimating a subspace through the origin requires three choices: the inner product defining a \emph{principal} direction, since angular and linear twist components carry different units (Appendix~\ref{app:pca_svd}); the characteristic length $\alpha$ setting the angular-to-linear scale (Appendix~\ref{app:pca_scaled}); and the reference point of the second moment, taken at the origin rather than the sample mean because the target passes through zero (Appendix~\ref{app:uncentered_centered}).

\subsection{Why a PCA Metric Is Required}
\label{app:pca_svd}

Let $W=[V_1,\ldots,V_M]\in\Rbb^{6\times M}$, $M=NH$, be the body-twist ensemble.
Ordinary Euclidean PCA on the stacked coordinates would declare one unit of angular velocity comparable to one unit of linear velocity---an arbitrary choice, since changing the length unit or a lever-arm convention rotates the principal components though the motion is unchanged.
PCA on twists is meaningful only under a chosen inner product on $\se(3)$~\cite{park1995distance}.
We fix a symmetric positive-definite PCA metric $G$ on twists, distinct from the inertia metric $\bar\Lambda$ of Appendix~\ref{app:normalized_block_iso} that defines the gains in Appendix~\ref{app:energy_decomp}, and define the leading principal direction by the uncentered $G$-variance,
\begin{equation}
    \langle V_1,V_2\rangle_G := V_1^\top G V_2,
    \qquad
    u_1 = \arg\max_{u^\top G u=1}\frac{1}{M}\sum_{\ell=1}^{M}\langle u,V_\ell\rangle_G^2 ,
    \label{eq:appD:metric_pca_obj}
\end{equation}
with higher directions under the same $G$-orthogonality constraint (the origin reference is justified in Appendix~\ref{app:uncentered_centered}).
The metric thus determines which angular--linear mixtures count as large variation.

The computation is an SVD.
With the Cholesky factor $G=L^\top L$, $\langle V_1,V_2\rangle_G=(LV_1)^\top(LV_2)$, so $G$-metric PCA is ordinary PCA on the transformed data,
\begin{align}
LW &= \widetilde U\,\widetilde\Sigma\,\widetilde V^\top,
&&\text{(SVD of metric-transformed twists)} \label{eq:appD:svd}\\
U &= L^{-1}\widetilde U,
&&\text{(back-transform to }\se(3)\text{)} \label{eq:appD:backtransform}
\end{align}
and the columns are $G$-orthonormal, $\langle u_i,u_j\rangle_G=\widetilde u_i^\top\widetilde u_j=\delta_{ij}$.
The captured uncentered $G$-variances are $\lambda_j=\widetilde\sigma_j^2/M$; in the noiseless rank-$m$ case only the first $m$ are nonzero.
Real ensembles have all six singular values positive, so we select $m$ by the rule of Section~\ref{sec:constraint_estimation}, gating each direction against an absolute speed $v_{\rm abs}$ and the least-excited mode,
\begin{equation}
    m=\#\bigl\{k: \sqrt{\lambda_k}>\max(\kappa_{\rm rel}\sqrt{\lambda_6},\,v_{\rm abs})\bigr\},
    \qquad
    \widehat{\mathcal S}_{\rm env}=\operatorname{span}\{u_1,\ldots,u_m\}=\operatorname{range}U_\parallel .
    \label{eq:appD:dimsel}
\end{equation}
Because $G$ weights the angular block by $\alpha^2$, the scaled twist $(\alpha\omega,v)$ has units of linear velocity at the grasp point (Appendix~\ref{app:pca_scaled}), so $\sqrt{\lambda_k}$, the root-mean-square component of the scaled twists along the $k$-th principal direction $\widetilde u_k$, is a speed in this metric, and $v_{\rm abs}$ a physical speed.
Each test targets one of the two sources of off-manifold components considered in Section~\ref{sec:constraint_estimation}.
The absolute test bounds the residual speed left by the generation limit of the policy; it does not depend on the least-excited mode, so it remains reliable when $\widetilde\sigma_6$ is small, as in simulation, where the grasp is rigid.
The relative test ties the cutoff to the spectral gap above the least-excited mode, whose level reflects grasp-chain deviations.
We apply both tests in both domains, with values set according to the main source in each (Appendix~\ref{app:hyperparams}).

\subsection{Meaning of the Characteristic Length \texorpdfstring{$\alpha$}{alpha}}
\label{app:pca_scaled}

Our block-isotropic metric and its Cholesky factor are
\begin{equation}
    G=\operatorname{blkdiag}(\alpha^2 I_3, I_3),
    \qquad
    L_\alpha=\operatorname{blkdiag}(\alpha I_3, I_3),
    \qquad
    L_\alpha V=(\alpha\omega,v),
    \label{eq:appD:Galpha}
\end{equation}
so $\alpha$ compares angular velocity to the linear velocity it would induce at a lever arm of length $\alpha$: small $\alpha$ makes rotation cheap, large $\alpha$ expensive, and the SVD of $L_\alpha W$ is PCA on the scaled twists $(\alpha\omega,v)$, back-transformed by~\eqref{eq:appD:backtransform}.
We set $\alpha$ from the unweighted ensemble in the controller's body coordinates and SI units (it is not an invariant object parameter), using the leading left singular vector $u=[u_\omega^\top,u_v^\top]^\top$ of $W$:
\begin{equation}
    \alpha=\operatorname{clip}\!\big(\|u_v\|/\|u_\omega\|,\ \alpha_{\min},\ \alpha_{\max}\big).
    \label{eq:appD:alpha}
\end{equation}
The ratio $\|u_v\|/\|u_\omega\|$ is the dominant translation-per-rotation scale; it equalizes the leading mode under $L_\alpha$ ($\|\alpha u_\omega\|=\|u_v\|$), so the dominant motion is neither rotation- nor translation-dominated in the PCA metric.
Clipping prevents nearly pure-rotation or pure-translation samples from ill-conditioning the metric (pure translation, $\|u_\omega\|=0$, clips to $\alpha_{\max}$).
This $\alpha$ is not the inertia length $\ell_\Lambda=\sqrt{\bar I/\bar m}$ of Appendix~\ref{app:normalized_block_iso}: $\alpha$ scales the data \emph{before} estimation, whereas $\ell_\Lambda$ defines the control metric applied \emph{after} the subspace is fixed.

\subsection{Why the PCA Is Uncentered}
\label{app:uncentered_centered}

The target $\mathcal S_{\rm env}$ passes through the zero twist---zero is feasible for every object and feasible motions are closed under scaling, so each $\mathcal S(q)$ and their span pass through the origin.
A centered covariance instead anchors at the sample mean and measures only deviations about the average motion, the wrong reference here.
With $\bar V=M^{-1}\sum_\ell V_\ell$, $W_c=W-\bar V\mathbf 1^\top$, and $G=L_\alpha^\top L_\alpha$, the uncentered and centered second moments differ by a rank-one mean term,
\begin{equation}
    \widehat M_0=\tfrac{1}{M}L_\alpha W W^\top L_\alpha^\top,
    \quad
    \widehat M_c=\tfrac{1}{M}L_\alpha W_c W_c^\top L_\alpha^\top,
    \quad
    \widehat M_0-\widehat M_c=L_\alpha\bar V\bar V^\top L_\alpha^\top ,
    \label{eq:appD:uncentered_centered_def}
\end{equation}
so centering removes exactly the consistent task-progress direction.
The cost is sharpest for a near-$1$-DoF ensemble $V_\ell=V_0+\varepsilon_\ell$ with $V_0\in\mathcal S_{\rm env}$ and $\|\varepsilon_\ell\|\ll\|V_0\|$: the uncentered moment scales with $\|V_0\|^2$ and recovers $V_0$, while the centered covariance subtracts $V_0$ and is dominated by the residual.
For articulated manipulation deterministic progress is signal, not bias.
In the noiseless factorization $W=U_\star Z$ ($\operatorname{range}U_\star=\mathcal S_{\rm env}$), the uncentered estimator recovers $\mathcal S_{\rm env}$ iff $\operatorname{rank}Z=\dim\mathcal S_{\rm env}$, in which case $\operatorname{range}U_\parallel=\operatorname{range}U_\star$; with off-subspace perturbation, recovery depends on the spectral gap gated by Eq.~\eqref{eq:appD:dimsel}.
The resulting basis $U_\parallel$ feeds the projector and gain construction of Appendix~\ref{app:energy_decomp}.

\section{Inertia-Metric Projectors and Energy-Consistent Gains}
\label{app:energy_decomp}

This appendix verifies two algebraic facts behind the gain construction: the Gram-matrix formula is the correct $\bar\Lambda$-orthogonal projector even though the PCA basis $U_\parallel$ is not $\bar\Lambda$-orthonormal, and the stiffness/damping matrices are Hessians of a quadratic potential and a Rayleigh dissipation in the local error coordinates $e_T,e_V\in\Rbb^6$, so their symmetry and positive (semi)definiteness follow rather than being assumed.
The closed-loop passivity they support is proved separately, under a local small-error scope, in Appendix~\ref{app:e_passivity}.
Throughout, $\bar\Lambda\succ0$ is the inertia metric of Appendix~\ref{app:normalized_block_iso}.

\subsection{\texorpdfstring{$\bar\Lambda$}{Lambda-bar}-Orthogonal Twist Projectors}
\label{app:c_projectors}

Let $U_\parallel\in\Rbb^{6\times m}$ be any full-rank basis of the estimated subspace; it is generally not $\bar\Lambda$-orthonormal, so the Gram matrix in the projector formula is essential.

\begin{proposition}[$\bar\Lambda$-orthogonal projector properties]
\label{prop:appE:proj}
Let $A:=U_\parallel^\top\bar\Lambda U_\parallel\in\Rbb^{m\times m}$ and define
\begin{equation}
    P_\parallel^{\bar\Lambda}=U_\parallel A^{-1}U_\parallel^\top\bar\Lambda,
    \qquad
    P_\perp^{\bar\Lambda}=I-P_\parallel^{\bar\Lambda}.
    \label{eq:appE:projdef}
\end{equation}
Since $\bar\Lambda\succ0$ and $U_\parallel$ has full column rank, $A\succ0$ (hence invertible, with $A^{-1}=A^{-\top}$).
Writing $P_i$ for either projector ($i\in\{\parallel,\perp\}$), the following hold:
\begin{enumerate}
    \item \emph{Idempotency:} $P_i^2=P_i$.
    \item \emph{Range and complementarity:} $P_\parallel^{\bar\Lambda}U_\parallel=U_\parallel$, $P_\perp^{\bar\Lambda}U_\parallel=0$, $\operatorname{range}P_\parallel^{\bar\Lambda}=\widehat{\mathcal S}_{\rm env}$, and $P_\parallel^{\bar\Lambda}+P_\perp^{\bar\Lambda}=I$.
    \item \emph{$\bar\Lambda$-self-adjointness:} $\bar\Lambda P_i=(\bar\Lambda P_i)^\top$, equivalently $P_i^\top\bar\Lambda=\bar\Lambda P_i$.
    \item \emph{Key quadratic identity:} $P_i^\top\bar\Lambda P_i=\bar\Lambda P_i$.
    \item \emph{Cross-orthogonality:} $P_i^\top\bar\Lambda P_j=\bar\Lambda P_iP_j=0$ for $i\neq j$.
\end{enumerate}
\end{proposition}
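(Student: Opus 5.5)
The plan is a direct algebraic verification, organized so that each item feeds the next. First I would confirm the preliminary claim $A\succ0$: for $x\neq0$, $U_\parallel x\neq0$ by full column rank, so $x^\top Ax=(U_\parallel x)^\top\bar\Lambda(U_\parallel x)>0$. The matrix $A$ is symmetric because $\bar\Lambda$ is, so $A^{-1}$ is symmetric as well, i.e.\ $A^{-1}=A^{-\top}$.

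\emph{Items 1 and 2.} I would start from the single identity $P_\parallel^{\bar\Lambda}U_\parallel=U_\parallel A^{-1}(U_\parallel^\top\bar\Lambda U_\parallel)=U_\parallel A^{-1}A=U_\parallel$. This immediately gives $P_\perp^{\bar\Lambda}U_\parallel=0$ and $P_\parallel^{\bar\Lambda}+P_\perp^{\bar\Lambda}=I$, the latter by definition. Idempotency of $P_\parallel^{\bar\Lambda}$ follows by writing $(P_\parallel^{\bar\Lambda})^2=(P_\parallel^{\bar\Lambda}U_\parallel)A^{-1}U_\parallel^\top\bar\Lambda=P_\parallel^{\bar\Lambda}$. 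For $P_\perp^{\bar\Lambda}$, I would expand $(I-P)^2=I-2P+P^2=I-P$. For the range, note that $\operatorname{range}P_\parallel^{\bar\Lambda}\subseteq\operatorname{range}U_\parallel$ from its left factor, and the reverse inclusion holds from $P_\parallel^{\bar\Lambda}U_\parallel=U_\parallel$. Hence the range equals $\widehat{\mathcal S}_{\rm env}$.

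\emph{Item 3.} Here I would compute $\bar\Lambda P_\parallel^{\bar\Lambda}=\bar\Lambda U_\parallel A^{-1}U_\parallel^\top\bar\Lambda$. This matrix is manifestly symmetric since $\bar\Lambda$ and $A^{-1}$ are symmetric. Transposing gives $(P_\parallel^{\bar\Lambda})^\top\bar\Lambda=\bar\Lambda P_\parallel^{\bar\Lambda}$. For $P_\perp^{\bar\Lambda}$, I would use $\bar\Lambda(I-P)=\bar\Lambda-\bar\Lambda P$, which is a difference of symmetric matrices.

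\emph{Items 4 and 5.} These follow from combining items 1 and 3. For item 4, $P_i^\top\bar\Lambda P_i=(\bar\Lambda P_i)P_i=\bar\Lambda P_i^2=\bar\Lambda P_i$. For item 5, with $i\ne j$, I would first get $P_i^\top\bar\Lambda P_j=\bar\Lambda P_iP_j$ by item 3. Then $P_\parallel P_\perp=P_\parallel-P_\parallel^2=0$, and likewise $P_\perp P_\parallel=P_\parallel-P_\parallel^2=0$, by idempotency.

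No step is a real obstacle. The only point requiring care is item 3, because $P_i$ itself is not symmetric: the symmetry must be attributed to $\bar\Lambda P_i$ and rests on the symmetry of $A^{-1}$. That is why I would establish $A^{-1}=A^{-\top}$ first, before anything else.
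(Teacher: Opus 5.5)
Your proposal is correct and follows essentially the same route as the paper: $A\succ0$ from full column rank, a single Gram cancellation $U_\parallel A^{-1}A=U_\parallel$ for idempotency and range, symmetry of $\bar\Lambda U_\parallel A^{-1}U_\parallel^\top\bar\Lambda$ for self-adjointness, and items 4 and 5 by combining self-adjointness with idempotency. Your explicit reverse range inclusion and your handling of the $P_\perp^{\bar\Lambda}$ cases simply fill in details the paper calls immediate.
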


\begin{proof}
Write $P_\parallel$ for $P_\parallel^{\bar\Lambda}$; $A$ is SPD since $x^\top A x=(U_\parallel x)^\top\bar\Lambda(U_\parallel x)>0$ for $x\neq0$.
\emph{Idempotency} and \emph{range} follow from one Gram cancellation, $P_\parallel^2=U_\parallel A^{-1}(U_\parallel^\top\bar\Lambda U_\parallel)A^{-1}U_\parallel^\top\bar\Lambda=P_\parallel$ and $P_\parallel U_\parallel=U_\parallel A^{-1}A=U_\parallel$, with every $P_\parallel x$ a column combination of $U_\parallel$; the complementary identities for $P_\perp=I-P_\parallel$ are immediate.
\emph{Self-adjointness} is $\bar\Lambda P_\parallel=\bar\Lambda U_\parallel A^{-1}U_\parallel^\top\bar\Lambda=(\bar\Lambda P_\parallel)^\top$, using $A^{-1}=A^{-\top}$ and $\bar\Lambda^\top=\bar\Lambda$ (and likewise for $P_\perp$).
Combining self-adjointness with idempotency,
\begin{align}
    P_i^\top\bar\Lambda P_i &= (\bar\Lambda P_i)P_i = \bar\Lambda P_i^2 = \bar\Lambda P_i,
       &&\text{(self-adjointness + idempotency)} \label{eq:appE:keyid}\\
    P_i^\top\bar\Lambda P_j &= \bar\Lambda P_iP_j = 0 \quad (i\neq j),
       &&\text{($P_\parallel P_\perp=P_\parallel(I-P_\parallel)=0$)} \label{eq:appE:cross}
\end{align}
which are the key quadratic identity and cross-orthogonality.
\end{proof}

\noindent An immediate consequence, used in Appendix~\ref{app:c_psd}, is the positive semidefiniteness of $\bar\Lambda P_i$: for any $x\in\Rbb^6$,
\begin{equation}
    x^\top\bar\Lambda P_i x
    \stackrel{\eqref{eq:appE:keyid}}{=} x^\top P_i^\top\bar\Lambda P_i x
    = \|P_ix\|_{\bar\Lambda}^2\ge0,
    \label{eq:appE:psd_scalar}
\end{equation}
so $\bar\Lambda P_i=(\bar\Lambda P_i)^\top\succeq0$.

\paragraph{Dual (wrench) projector.}
The same operator splits wrenches by duality: under the power pairing $\langle F,V\rangle=F^\top V$ the wrench-side projector is $\bar P_i:=(P_i^{\bar\Lambda})^\top=\bar\Lambda\,P_i^{\bar\Lambda}\,\bar\Lambda^{-1}$ (self-adjointness, Prop.~\ref{prop:appE:proj}(3)).
By cross-orthogonality~\eqref{eq:appE:cross} a wrench in $\operatorname{range}\bar P_\perp$ does no power on a twist in $\widehat{\mathcal S}_{\rm env}$; for a true feasible twist the residual power is governed by the subspace-estimation error (SE metric, Appendix~\ref{app:exp_config}).

\subsection{Energy/Rayleigh-Based Gain Derivation}
\label{app:c_energy}

We assign separate elastic and damping costs to the $\bar\Lambda$-orthogonal error components, in the local coordinates $(e_T,e_V)$ of Eq.~\eqref{eq:tracking_errors} with the projectors held fixed:
\begin{equation}
    E_T(e_T)=\tfrac12\sum_{i\in\{\parallel,\perp\}}k_i\,\|P_i^{\bar\Lambda} e_T\|_{\bar\Lambda}^2,
    \qquad
    \mathcal R(e_V)=\tfrac12\sum_{i\in\{\parallel,\perp\}}d_i\,\|P_i^{\bar\Lambda} e_V\|_{\bar\Lambda}^2 .
    \label{eq:appE:energy}
\end{equation}
The key identity~\eqref{eq:appE:keyid} collapses each modal norm, $\|P_i^{\bar\Lambda} e\|_{\bar\Lambda}^2=e^\top\bar\Lambda P_i^{\bar\Lambda}e$, so $E_T,\mathcal R$ are quadratics whose Euclidean Hessians give the gains directly,
\begin{align}
    K_v
    &= \nabla^2_{e_T}E_T = \sum_i k_i\,\bar\Lambda P_i^{\bar\Lambda}
     = \bar\Lambda\bigl(k_\parallel P_\parallel^{\bar\Lambda}+ k_\perp P_\perp^{\bar\Lambda}\bigr),
    \notag\\
    D_v
    &= \nabla^2_{e_V}\mathcal R = \sum_i d_i\,\bar\Lambda P_i^{\bar\Lambda}
     = \bar\Lambda\bigl(d_\parallel P_\parallel^{\bar\Lambda}+ d_\perp P_\perp^{\bar\Lambda}\bigr),
    \label{eq:appE:gains}
\end{align}
the stiff-feasible/compliant-complement form of Eq.~\eqref{eq:decomposed_gains}.
Here $E_T$ is an exact conservative potential for the spring wrench $K_v e_T$ only in the first-order regime $\dot e_T=e_V$; we read it as the local elastic potential of the small-error passivity certificate (Appendix~\ref{app:e_passivity}), not a globally conservative $\SE(3)$ potential.

\subsection{Symmetry, PSD, and SPD}
\label{app:c_psd}

Each $\bar\Lambda P_i^{\bar\Lambda}$ is symmetric (Prop.~\ref{prop:appE:proj}(3)) and positive semidefinite (Eq.~\eqref{eq:appE:psd_scalar}), so as nonnegative combinations the gains $K_v=\sum_i k_i\bar\Lambda P_i^{\bar\Lambda}$ and $D_v=\sum_i d_i\bar\Lambda P_i^{\bar\Lambda}$ are symmetric PSD for all $k_i,d_i\ge0$.
They are symmetric positive definite (SPD) when the modal gains are strictly positive: since the cross term vanishes by~\eqref{eq:appE:cross}, the Pythagorean split gives, for $x\neq0$,
\begin{equation}
    x^\top K_v x
    = k_\parallel\|P_\parallel^{\bar\Lambda}x\|_{\bar\Lambda}^2 + k_\perp\|P_\perp^{\bar\Lambda}x\|_{\bar\Lambda}^2
    \ge \min(k_\parallel,k_\perp)\,\|x\|_{\bar\Lambda}^2 > 0 ,
\end{equation}
and likewise for $D_v$.
In our experiments $k_\perp,d_\perp$ are small but strictly positive (Appendix~\ref{app:iso_ours_oracle_gains}), so $K_v,D_v$ are SPD throughout the articulation phase.

\section{Closed-Loop Error Dynamics and Local Passivity}
\label{app:closed_loop}

We prove a local statement: near an operating point (fixed projectors, first-order error kinematics, frozen plant inertia), the decomposed feedback admits a quadratic storage function and a dissipation inequality.
Appendix~\ref{app:e_error_dyn} isolates the two ingredients (the $e_T$--$e_V$ kinematic relation and the frozen nominal model); Appendix~\ref{app:e_passivity} combines them.
The certificate is local and does not claim global stability for contact-rich manipulation.
Two inertias appear here: the true configuration-dependent plant inertia $\Lambda_b(q)$, and the constant surrogate $\bar\Lambda$ of Appendix~\ref{app:normalized_block_iso} carried by the gains.

\subsection{Local Error Dynamics}
\label{app:e_error_dyn}

\paragraph{Local error chart.}
Assume $T_b(t),T_d(t)$ smooth and within a chart where the principal logarithm of the relative rotation is single-valued.
With $R_e:=R_b^\top R_d$, $\rho:=R_b^\top(p_d-p_b)$, $\phi:=\log(R_e)^\vee$, so $e_T=(\phi,\rho)$, Eq.~\eqref{eq:tracking_errors} gives $e_\omega=R_e\omega_d-\omega_b$ and $e_v=\rho\times R_e\omega_d+R_e v_d-v_b$, so $e_V=(e_\omega,e_v)$.
The twist error is the right-trivialized relative-error velocity $(\dot T_eT_e^{-1})^\vee$ ($T_e:=T_b^{-1}T_d$) in current body coordinates, so $\dot R_eR_e^\top=(e_\omega)^\wedge$; with the left Jacobian $(J_l(\phi)\dot\phi)^\wedge=\dot R_eR_e^\top$ and $\dot\rho=e_v-\rho^\wedge e_\omega$, the exact local kinematics are
\begin{equation}
    \dot e_T = A(e_T)\,e_V,
    \qquad
    A(e_T)=
    \begin{bmatrix}
        J_l(\phi)^{-1} & 0\\
        -\rho^\wedge & I_3
    \end{bmatrix}.
    \label{eq:appF:etdot_exact}
\end{equation}
Since $J_l(\phi)^{-1}=I_3+O(\|\phi\|)$ and $\rho^\wedge=O(\|\rho\|)$, $A(0)=I_6$ and $A(e_T)e_V=e_V+O(\|e_T\|\,\|e_V\|)$, giving the first-order reconstruction
\begin{equation}
    \dot e_T = e_V
    \label{eq:appF:etdot}
\end{equation}
used in the nominal model ($J_l$ nonsingular near $\phi=0$, i.e.\ away from the rotation cut locus).

\paragraph{Frozen nominal model.}
Substituting $F_{\rm cmd}=F_{\rm ff}+K_v e_T+D_v e_V$ into the body-frame plant of Appendix~\ref{app:e_dynamics}, with $F_{\rm ff}$ supplying the modeled reference-acceleration, Coriolis/centrifugal, and gravity compensation, leaves the residual impedance model about the desired pose.
Freezing the plant inertia at $\Lambda_0:=\Lambda_b(q_0)\succ0$ (omitting a residual that collects the remaining $\Lambda_b(q)$ variation, higher-order pose terms, and modeling error) gives
\begin{equation}
    \Lambda_0\, \dot e_V + D_v\, e_V + K_v\, e_T = F_{\rm ext},
    \label{eq:appF:error_dyn}
\end{equation}
with $F_{\rm ext}$ the contact wrench in $\{b\}$ (sign as in Appendix~\ref{app:e_dynamics}) and $e_T,e_V$ as in Eq.~\eqref{eq:tracking_errors}.
Appendix~\ref{app:e_passivity} uses only~\eqref{eq:appF:error_dyn} and the first-order kinematics~\eqref{eq:appF:etdot}---the first-order frozen-inertia local model.

\subsection{Storage Function and Local Passivity}
\label{app:e_passivity}

With the first-order frozen-inertia local model in place (constant symmetric gains, $\dot e_T=e_V$, and the frozen-inertia impedance equation), the stiffness contribution to the storage derivative cancels exactly, leaving a pure dissipation term.

\begin{proposition}[Local passivity of the decomposed impedance loop]
\label{prop:appF:passivity}
Consider the first-order frozen-inertia local model of Appendix~\ref{app:e_error_dyn}, i.e.\ the impedance relation~\eqref{eq:appF:error_dyn} with the first-order kinematics~\eqref{eq:appF:etdot}.
Let
\begin{equation}
    K_v=\sum_{i\in\{\parallel,\perp\}}k_i\bar\Lambda P_i^{\bar\Lambda},
    \qquad
    D_v=\sum_{i\in\{\parallel,\perp\}}d_i\bar\Lambda P_i^{\bar\Lambda},
    \qquad
    k_i,d_i\ge0,
    \label{eq:appF:fixed_gains}
\end{equation}
with $U_\parallel$, $\bar\Lambda$, $k_i$, $d_i$, and the projectors held fixed.
Then the storage function
\begin{equation}
    \mathcal{V}(e_T, e_V) = \tfrac{1}{2}\, e_V^\top \Lambda_0\, e_V + E_T(e_T),
    \qquad
    E_T(e_T)=\tfrac12 e_T^\top K_v e_T,
    \label{eq:appF:storage}
\end{equation}
with $E_T$ the elastic potential of Appendix~\ref{app:c_energy}, satisfies along the nominal first-order model
\begin{equation}
    \dot{\mathcal{V}} = -\, e_V^\top D_v\, e_V + e_V^\top F_{\rm ext}\ \le\ e_V^\top F_{\rm ext}.
    \label{eq:appF:dotV}
\end{equation}
Hence the nominal first-order error system satisfies an error-coordinate dissipation inequality with supply $e_V^\top F_{\rm ext}$ (an error-coordinate supply rate, not physical contact-port passivity; see the scope note below).
In free motion ($F_{\rm ext}=0$), the origin of the linear frozen model is exponentially stable when $K_v\succ0$ and $D_v\succ0$, which holds when $k_\parallel,k_\perp,d_\parallel,d_\perp>0$.
With only semidefinite gains, the result is the dissipation inequality above, not convergence of all error coordinates.
\end{proposition}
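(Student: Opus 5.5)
The plan is a direct Lyapunov-style computation on the frozen first-order model, followed by a standard linear-systems argument for the free-motion claim.

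First, I record the structural facts needed from earlier results. By Proposition~\ref{prop:appE:proj}(3) and Eq.~\eqref{eq:appE:psd_scalar}, each $\bar\Lambda P_i^{\bar\Lambda}$ is symmetric positive semidefinite. Hence $K_v$ and $D_v$ in \eqref{eq:appF:fixed_gains} are constant, symmetric and PSD for $k_i,d_i\ge0$. Since $\Lambda_0=\Lambda_b(q_0)\succ0$ is also constant and symmetric, $\mathcal V$ in \eqref{eq:appF:storage} is a well-defined nonnegative quadratic form. Its time derivative involves only $\dot e_V$ and $\dot e_T$; no terms from time-varying metrics or projectors appear, because everything is held fixed.

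Second, I differentiate $\mathcal V$ along trajectories. Symmetry of $\Lambda_0$ and $K_v$ gives
\[
\dot{\mathcal V}=e_V^\top\Lambda_0\dot e_V+e_T^\top K_v\dot e_T .
\]
Substituting $\Lambda_0\dot e_V=F_{\rm ext}-D_ve_V-K_ve_T$ from \eqref{eq:appF:error_dyn} and $\dot e_T=e_V$ from \eqref{eq:appF:etdot} gives
\[
\dot{\mathcal V}=e_V^\top F_{\rm ext}-e_V^\top D_ve_V-e_V^\top K_ve_T+e_T^\top K_ve_V .
\]
The last two terms cancel because $K_v=K_v^\top$. This yields the equality in \eqref{eq:appF:dotV}, and the inequality follows from $D_v\succeq0$. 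This step is exactly where the first-order kinematics matter. With the exact kinematics \eqref{eq:appF:etdot_exact}, the cross term would become $e_T^\top K_v(A(e_T)-I)e_V$, which is only $O(\|e_T\|^2\|e_V\|)$. I will note this to justify the local scope.

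Third, for free motion with $K_v,D_v\succ0$, I treat the system as the linear time-invariant system in $x=(e_T,e_V)$ with matrix
\[
\begin{bmatrix}0&I\\-\Lambda_0^{-1}K_v&-\Lambda_0^{-1}D_v\end{bmatrix}.
\]
Positive definiteness of the gains holds when $k_\parallel,k_\perp,d_\parallel,d_\perp>0$, by the Pythagorean bound in Appendix~\ref{app:c_psd}. In this case $\mathcal V\succ0$ and $\dot{\mathcal V}=-e_V^\top D_ve_V\le0$.

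This step is the only real obstacle, because $\dot{\mathcal V}$ is only negative semidefinite (it vanishes whenever $e_V=0$). I would first try LaSalle's principle. On the set $\{e_V=0\}$, invariance forces $\dot e_V=0$, so $K_ve_T=0$ and therefore $e_T=0$. Asymptotic stability of a linear system then implies exponential stability, since all eigenvalues must lie in the open left half-plane.

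Alternatively, I would give an explicit strict Lyapunov function. I would add a small cross term $\varepsilon\, e_T^\top\Lambda_0 e_V$ and choose $\varepsilon$ small relative to the extreme eigenvalues of $K_v$, $D_v$ and $\Lambda_0$, so that the modified function stays positive definite and its derivative becomes negative definite.

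Finally, I would remark on the semidefinite case. There the computation of the second step still gives the dissipation inequality, but LaSalle fails wherever $\ker K_v$ or $\ker D_v$ is nontrivial. This is why the statement claims no convergence for semidefinite gains.
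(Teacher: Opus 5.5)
Your proposal is correct and matches the paper's proof essentially step for step. Both use the chain rule with $\dot e_T=e_V$ and the frozen impedance equation, cancel the $K_v$ cross terms by symmetry, and obtain the inequality from $D_v\succeq0$. For free motion, both use the invariant-set argument ($e_V=0\Rightarrow K_ve_T=0\Rightarrow e_T=0$) and upgrade to exponential stability because the system is linear.

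One small imprecision is in your closing remark. A nontrivial $\ker D_v$ with $K_v\succ0$ does not by itself make LaSalle fail. It only removes the guarantee, since $\Lambda_0\neq\bar\Lambda$ can couple damped and undamped modes. A nontrivial $\ker K_v$, by contrast, genuinely yields a continuum of non-zero equilibria.
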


\begin{proof}
Differentiating Eq.~\eqref{eq:appF:storage} in the first-order frozen local model and using $\nabla_{e_T}E_T=K_v e_T$ with $\dot e_T=e_V$ from Eq.~\eqref{eq:appF:etdot},
\begin{align}
    \dot{\mathcal V}
    &= e_V^\top \Lambda_0\, \dot e_V + e_T^\top K_v\, \dot e_T
       &&\text{(chain rule, $K_v$ fixed)}\nonumber\\
    &= e_V^\top \Lambda_0\, \dot e_V + e_V^\top K_v\, e_T
       &&\text{(first-order kinematics; $K_v=K_v^\top$)}\nonumber\\
    &= e_V^\top\!\left(F_{\rm ext} - D_v e_V - K_v e_T\right) + e_V^\top K_v\, e_T
       &&\text{(Eq.~\eqref{eq:appF:error_dyn})}\nonumber\\
    &= -\, e_V^\top D_v\, e_V + e_V^\top F_{\rm ext}.
       &&\text{(cancel $K_v$ terms)}
\end{align}
Because each $\bar\Lambda P_i^{\bar\Lambda}$ is symmetric PSD (Appendix~\ref{app:c_psd}), $D_v\succeq0$ for $d_i\ge0$, so $e_V^\top D_v e_V\ge0$ and Eq.~\eqref{eq:appF:dotV} follows.
If $K_v,D_v\succ0$, then $\mathcal V$ is positive definite and the unforced frozen model~\eqref{eq:appF:error_dyn} is a linear constant-coefficient system whose origin is exponentially stable (equivalently, the only invariant set in $\dot{\mathcal V}=0$ has $e_V=0$, forcing $K_v e_T=0$ and hence $e_T=0$).
With only semidefinite gains the same calculation proves the passivity inequality but need not force every error coordinate to converge.
\end{proof}

\paragraph{Scope.}
The certificate rests on (i) feedforward cancellation of the modeled $\hat\Lambda_b,\hat C_b,\hat g_b$ (Appendix~\ref{app:e_dynamics}), (ii) fixed projectors, so $K_v,D_v$ are constant and $E_T$ a genuine potential, (iii) the first-order kinematics~\eqref{eq:appF:etdot} in place of~\eqref{eq:appF:etdot_exact}, and (iv) the frozen inertia $\Lambda_b(q)\approx\Lambda_0$.
Collecting the omitted terms as a residual $r_{\rm dyn}$ adds $e_V^\top r_{\rm dyn}$ to Eq.~\eqref{eq:appF:dotV}: retaining the exact kinematics contributes $e_T^\top K_v(A(e_T)-I_6)e_V$ (higher order, not sign-definite), varying projectors or gains contributes $\tfrac12 e_T^\top\dot K_v e_T$, and an unfrozen plant contributes $\tfrac12 e_V^\top\dot\Lambda_b e_V$ with feedforward-cancellation residuals.
Under modeling error, contact-mode switches, large pose errors, projector updates, or appreciable $\Lambda_b(q)$ variation these need not preserve the inequality.
The certificate thus guarantees passivity-compatibility of the nominal fixed-projector, first-order frozen-inertia model only; it does not claim global stability, exact inertial decoupling, physical contact-port passivity under moving references, or passivity of arbitrary contact-rich manipulation.

\section{Gain Construction and Scaling}
\label{app:gain_parity}

We construct the scalar gains $k_\parallel,k_\perp,d_\parallel,d_\perp$ of Eq.~\eqref{eq:decomposed_gains} as \emph{formulas only}; numerical values ($k_0,r,\zeta,\bar I,\bar m$) are in Appendix~\ref{app:numerical_gains}.
The relevant metric is $\bar\Lambda$ of Appendix~\ref{app:normalized_block_iso}.
The $k_i,d_i$ are normalized modal gains; physical wrench-level stiffness and damping arise only after premultiplication by $\bar\Lambda$ (Appendix~\ref{app:normalized_block_iso}), which carries the inertia scale while only $\ell_\Lambda^2=\bar I/\bar m$ enters the directional split.

\subsection{Second-Order Gain Rule}
\label{app:second_order_gain_rule}

We fix the four scalars through interpretable second-order targets, a natural frequency $\omega_n$ and damping ratio $\zeta$, rather than through stiffness and damping directly.
Replacing the frozen plant inertia $\Lambda_0$ of Appendix~\ref{app:e_error_dyn} by the gain surrogate $\bar\Lambda$ gives the normalized design model
\begin{equation}
    \bar\Lambda\dot e_V+D_ve_V+K_ve_T=F_{\rm ext},
    \qquad
    \dot e_T=e_V ,
    \label{eq:appG:design_model}
\end{equation}
a scalar-gain convention only; the certificate of Appendix~\ref{app:e_passivity} retains the true $\Lambda_0$.
Premultiplying by $\bar\Lambda^{-1}$ and using $P_i^2=P_i$, $P_iP_j=0$ (Appendix~\ref{app:c_projectors}), each $\bar\Lambda$-orthogonal component evolves independently with modal gains $(k_\parallel,d_\parallel)$ on $\widehat{\mathcal S}_{\rm env}$ and $(k_\perp,d_\perp)$ on its complement, so in a $\bar\Lambda$-orthonormal coordinate $z$,
\begin{equation}
    \ddot z + d\,\dot z + k\,z = f_z ,
    \qquad (k,d)\in\{(k_\parallel,d_\parallel),(k_\perp,d_\perp)\}.
    \label{eq:appG:scalar_loop}
\end{equation}
Matching to $\ddot z+2\zeta\omega_n\dot z+\omega_n^2 z=f_z$ (so $\omega_n=\sqrt k$, $\zeta=d/(2\sqrt k)$) inverts to
\begin{equation}
    k=\omega_n^2,
    \qquad
    d=2\zeta\sqrt{k}=2\zeta\omega_n ,
    \label{eq:appG:gain_inversion}
\end{equation}
fixing each pair $(k_\parallel,d_\parallel)$, $(k_\perp,d_\perp)$ from a chosen $(\omega_n,\zeta)$.

\subsection{Iso, Ours, and Oracle Gains}
\label{app:iso_ours_oracle_gains}

We specify the three compared controllers (Section~\ref{sec:experiments}), all instantiating Eq.~\eqref{eq:decomposed_gains} via~\eqref{eq:appG:gain_inversion} and differing only in the feasible/constrained scalar gains and the subspace defining the projectors.
A single \emph{complementary reduction ratio} $r\in(0,1]$ scales the complement stiffness at fixed $\zeta$, with reference stiffness $k_0$ and $d_0=2\zeta\sqrt{k_0}$; $r=1$ gives Iso, $r<1$ gives Ours and Oracle.

\paragraph{Iso.}
Identical gains in every direction,
\begin{equation}
    k_\parallel=k_\perp=k_0,
    \qquad
    d_\parallel=d_\perp=d_0=2\zeta\sqrt{k_0} ,
    \label{eq:appG:iso}
\end{equation}
collapse the projector split (via $P_\parallel^{\bar\Lambda}+P_\perp^{\bar\Lambda}=I$) to $\bar\Lambda$-isotropic gains $K_v=k_0\bar\Lambda$, $D_v=d_0\bar\Lambda$.
Iso needs no subspace estimate and is the reference against which we measure constraint-violating load.

\paragraph{Ours.}
The feasible impedance matches Iso (Eq.~\eqref{eq:appG:iso}), $k_\parallel=k_0$ and $d_\parallel=d_0$ along $\widehat{\mathcal S}_{\rm env}$, while the complement stiffness is scaled, $k_\perp=r\,k_0$, and the complement damping is chosen to preserve $\zeta$:
\begin{align}
    d_\perp
    &= 2\zeta\sqrt{k_\perp}
    && \text{(inversion~\eqref{eq:appG:gain_inversion} at fixed $\zeta$)} \notag\\
    &= 2\zeta\sqrt{r\,k_0} = \sqrt{r}\,d_0 .
    && \text{($\sqrt r$ damping factor)}
    \label{eq:appG:ours_dperp}
\end{align}
The $\sqrt r$ factor is forced: holding $\zeta=d/(2\sqrt k)$ while reducing $k_\perp$ by $r$ drops $d_\perp$ by $\sqrt r$, not $r$.
For $r<1$ this gives $k_\parallel>k_\perp$ and $d_\parallel>d_\perp$---the stiff-feasible/compliant-complement structure of Eq.~\eqref{eq:decomposed_gains}---at a common damping ratio $\zeta$, hence comparable transient shape.
Values of $k_0,r,\zeta,\bar I,\bar m$ are in Appendix~\ref{app:numerical_gains}.

\paragraph{Oracle.}
Oracle (simulation only) uses the \emph{identical} gain rule as Ours (same $k_0,r,\zeta$ and Eq.~\eqref{eq:appG:ours_dperp}) but builds the projectors from the analytic reference subspace at each rollout's grasp pose, which is also the reference of the evaluation metrics (Appendix~\ref{app:impl}), rather than the estimated $\widehat{\mathcal S}_{\rm env}$.
Any Ours--Oracle gap therefore isolates subspace-estimation error, not the gain construction.

\section{Experimental Configuration}
\label{app:exp_config}

This section gives the configuration details for the experiments of Section~\ref{sec:experiments}: estimation and control hyperparameters and their sensitivity (Appendix~\ref{app:hyperparams}), numerical inertia and gain values (Appendix~\ref{app:numerical_gains}), and implementation---policy, platforms, demonstration data, sensing, and metrics (Appendix~\ref{app:impl}).

\subsection{Hyperparameters}
\label{app:hyperparams}

Table~\ref{tab:appH:common} lists the common configuration shared across all mechanisms.

\begin{table}[H]
\centering
\caption{\textbf{Common configuration.}
Estimation and control hyperparameters shared across all simulated and real-world mechanisms.
The reduction ratio $r$ and damping ratio $\zeta$ define the complement gains via Appendix~\ref{app:iso_ours_oracle_gains}; per-platform numerical gain values are in Table~\ref{tab:appH:numgains}.}
\label{tab:appH:common}
\vspace{0.25em}
\small
\begin{tabular}{p{0.12\linewidth} p{0.25\linewidth} p{0.50\linewidth}}
\toprule
Group & Symbol & Value \\
\midrule
\multirow{7}{*}{Estimation}
 & \raggedright Ensemble size $N$ & $128$ (simulation); $32$ (real) \\
 & \raggedright Trajectory horizon $H$ & $24$ \\
 & \raggedright Absolute threshold $v_{\rm abs}$ & $5$~mm/s (simulation); $3$~mm/s (real). \\
 & \raggedright Relative threshold $\kappa_{\rm rel}$ & $4$ (simulation); $10$ for revolute, prismatic, helical, and cylindrical objects and $4$ for the crank (real). \\
 & \raggedright Sampler at grasp closure & DDIM, $10$ steps (simulation); DDPM, $100$ steps (real). \\
 & \raggedright Characteristic-length clip $[\alpha_{\min},\alpha_{\max}]$ & $[0.03,\,0.5]$ \\
 & \raggedright Finite-difference timestep $\Delta t$ (s) & $0.1$ \\
\midrule
\multirow{2}{*}{Control}
 & \raggedright Modal gains $k_\parallel,k_\perp,d_\parallel,d_\perp$ & Per-platform; see Table~\ref{tab:appH:numgains}. \\
 & \raggedright Reduction ratio $r$ & $0.10$ (Ours, Oracle); $1.0$ (Iso). \\
\bottomrule
\end{tabular}
\end{table}

\paragraph{Dimension-selection thresholds.}
Both thresholds of Eq.~\eqref{eq:appD:dimsel} can be calibrated before deployment: $v_{\rm abs}$ from the residual speed that the intended policy configuration generates after training on constraint-satisfying trajectories of a known low-dimensional manifold (larger for noisier generation), and $\kappa_{\rm rel}$ from grasp-chain deviations in test demonstrations with the intended gripper--object setup (larger for more anisotropic compliance or clearance).
In simulation, the policies are trained on scripted demonstrations with accurate grasps, and the gripper and the object are rigid, so the off-manifold components come mainly from the model and sampler, and the absolute test dominates ($v_{\rm abs}=5$~mm/s, with $\kappa_{\rm rel}=4$): alone, it selects the correct dimension on all $398$ single-task trials in which the grasp succeeds (the grasp fails in one revolute and one planar trial).
On the real robot, finger deformation and handle clearance cause grasp-chain deviations that exceed the off-manifold components due to the model and sampler, which we keep small by sampling with $100$ DDPM steps, so the relative test dominates ($\kappa_{\rm rel}=10$, with $v_{\rm abs}=3$~mm/s): alone, it selects the correct dimension on all $50$ rollouts, with $\kappa_{\rm rel}=4$ for the crank to retain its weakly excited envelope direction.
With these values, the selected dimension is correct in all our rollouts with a completed grasp.

\paragraph{Sensitivity to the characteristic length.}
On one real rollout per object, holding the dimension fixed and scaling the clipped estimate $\alpha_0$ of Eq.~\eqref{eq:appD:alpha} by $\alpha/\alpha_0\in\{0.25,0.5,2,4\}$ moves the estimated subspace by at most $1.7^\circ$ (largest principal angle in the metric $\bar\Lambda$).

Figure~\ref{fig:appH:alpha_kv_heatmaps} shows the corresponding stiffness matrices ($N{=}32$, $H{=}24$): varying $\alpha$ can only rotate the stiff subspace, never rescale the gains, since $\{k_\parallel,k_\perp\},\{d_\parallel,d_\perp\}$ are the $\bar\Lambda$-eigenvalues of $K_v,D_v$ (Appendix~\ref{app:c_psd}).
The clip $[\alpha_{\min},\alpha_{\max}]$ only guards the PCA metric $G$ against degenerate angular--linear scalings (Appendix~\ref{app:pca_scaled}); on the real robot it is active only for the prismatic and helical objects.

\begin{figure}[t]
    \centering
    \includegraphics[width=\linewidth]{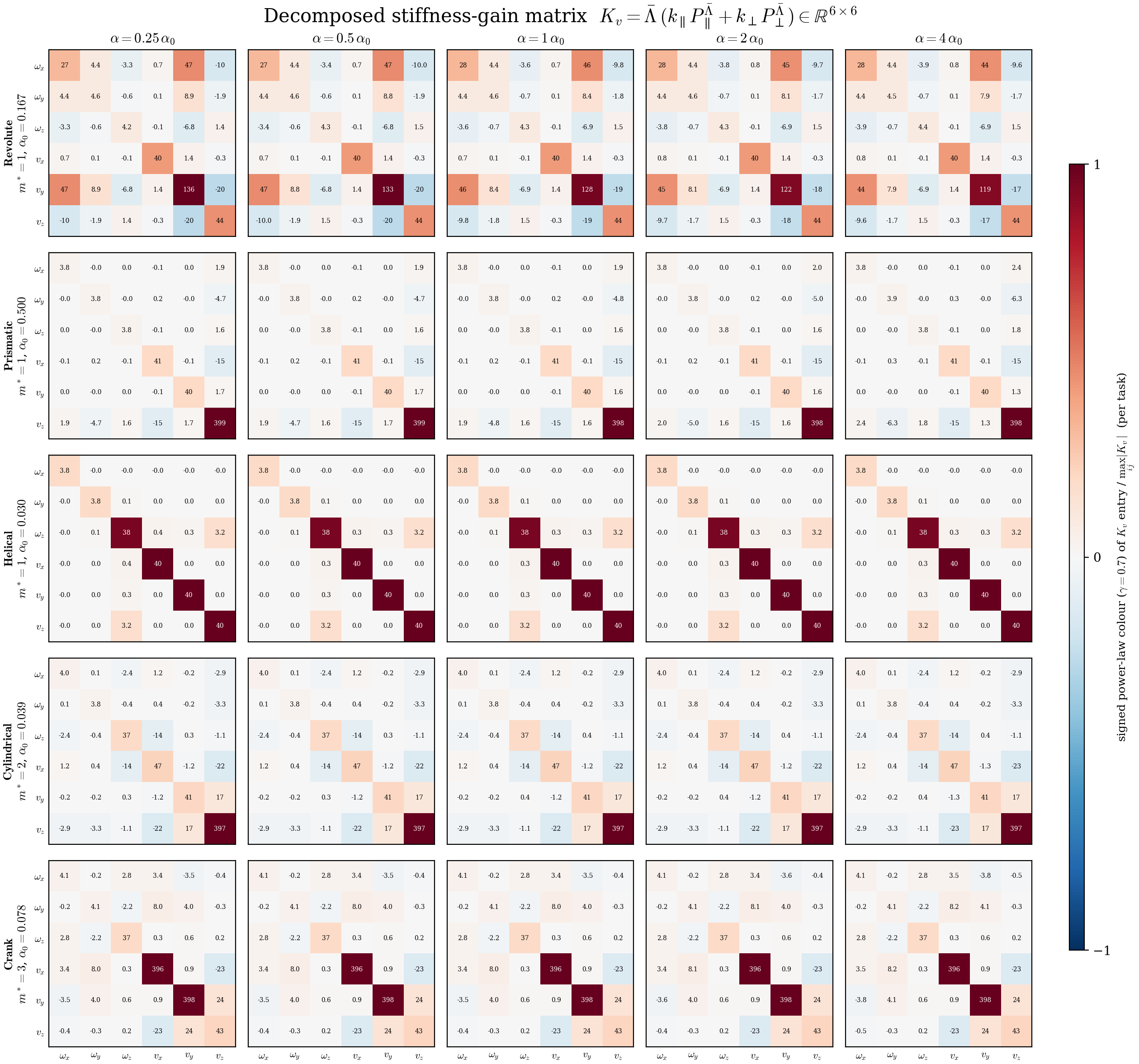}
    \caption{\textbf{Stiffness-gain matrices over the real FR3 $\alpha$ sweep.}
    Each panel shows the decomposed stiffness matrix $K_v=\bar\Lambda(k_\parallel P_\parallel^{\bar\Lambda} +k_\perp P_\perp^{\bar\Lambda})$ in body-twist order $(\omega_x,\omega_y,\omega_z,v_x,v_y,v_z)$.
    Rows correspond to real tasks, one rollout each (the one with the median SE), and columns to $\alpha/\alpha_0\in\{0.25,0.5,1,2,4\}$, with $\alpha_0$ the clipped estimate of Eq.~\eqref{eq:appD:alpha}.
    Printed numbers are the physical matrix entries; cell color encodes the signed, per-task-normalized value (blue $<0$, red $>0$).
    All five tasks vary little across the sweep.}
    \label{fig:appH:alpha_kv_heatmaps}
\end{figure}

\subsection{Numerical Gain Values}
\label{app:numerical_gains}

Table~\ref{tab:appH:numgains} reports the per-platform inertia surrogate ($\bar m,\bar I,\ell_\Lambda$, trace-averaged as in Appendix~\ref{app:normalized_block_iso}) and the scalar modal gains (Appendix~\ref{app:iso_ours_oracle_gains}).
\textbf{Iso} sets $r=1$; \textbf{Ours} and \textbf{Oracle} share these gains and differ only in using the estimated $\widehat{\mathcal S}_{\rm env}$ vs.\ the grasp-conditioned analytic reference.

\begin{table}[t]
\centering
\caption{\textbf{Numerical inertia and gain values.}
Per-platform inertia-metric parameters and the decomposed scalar gains under the construction of Appendix~\ref{app:iso_ours_oracle_gains}.
The inertia columns use kg, kg$\cdot$m$^2$, and m for $\bar m$, $\bar I$, and $\ell_\Lambda$, respectively.
The scalar gains $k_\parallel,k_\perp$ and $d_\parallel,d_\perp$ are normalized modal gains with units s$^{-2}$ and s$^{-1}$; physical wrench-level stiffness and damping units arise only after forming $K_v$ and $D_v$ with the normalized metric of Appendix~\ref{app:normalized_block_iso}.
The damping ratio $\zeta=\sqrt{\bar m/m_{\rm ref}}$ of this normalized design model makes every mode critically damped for the physical surrogate inertia; it is not a measured damping ratio of the robot--object system.
Iso uses $r=1$.}
\label{tab:appH:numgains}
\vspace{0.25em}
\small
\setlength{\tabcolsep}{4.5pt}
\begin{tabular}{l c c c c c c c c c c}
\toprule
Platform & $\bar m$ & $\bar I$ & $\ell_\Lambda$ & $k_0$ & $r$ & $\zeta$ & $k_\parallel$ & $k_\perp$ & $d_\parallel$ & $d_\perp$ \\
\midrule
Flying gripper (sim) & $0.586$ & $0.00622$ & $0.103$ & $250$ & $0.10$ & $0.766$ & $250$ & $25$ & $24.21$ & $7.66$ \\
FR3 (real)           & $7.70$ & $0.724$ & $0.307$ & $400$ & $0.10$ & $2.775$ & $400$ & $40$ & $111$ & $35.1$ \\
\bottomrule
\end{tabular}
\end{table}

\subsection{Implementation Details}
\label{app:impl}

\paragraph{Policy.}
All controllers share one generative backbone, a Diffusion Policy~\cite{chi2025diffusion} trained from scratch per object so its sampling diversity reflects that object's feasible motions; in simulation, we also train one multi-task policy on the demonstrations of all four objects, sampled equally and without a task identifier.
The training recipe is in Table~\ref{tab:appH:dp}; simulated policies use the scripted demonstrations of Table~\ref{tab:appH:simdata} and real policies the leader--follower teleoperation described below.

\begin{table}[t]
\centering
\caption{\textbf{Diffusion Policy training configuration.}
Common policy interface and training recipe used for the per-object checkpoints and the simulated multi-task checkpoint.}
\label{tab:appH:dp}
\vspace{0.25em}
\small
\begin{tabular}{p{0.31\linewidth} p{0.61\linewidth}}
\toprule
Setting & Value \\
\midrule
Backbone & Diffusion Policy~\cite{chi2025diffusion} \\
Observation history & $1$ step \\
Predicted / executed action horizon & $24$ / $16$ steps \\
Policy sampling period & $0.1$~s ($10$~Hz) \\
Observation & $224\times224$ wrist RGB, end-effector pose (position $+$ $6$-D
rotation), and gripper opening ($\Rbb^{10}$) \\
Action & Per-step $\Rbb^{10}$ relative pose chunk in the current EE frame:
pose ($9$) and gripper command ($1$) \\
Visual encoder & ResNet-18 + spatial softmax \\
Denoising steps (train / infer) & $100$ / $10$ DDIM (simulation), $100$ DDPM (real) \\
Optimizer & AdamW, $\beta=(0.95,0.999)$, weight decay $10^{-6}$ \\
Learning rate & $10^{-4}$, cosine decay \\
Batch size & $64$ \\
Training length / checkpoint & Sim: $500$ steps/epoch; the first checkpoint (saved every $50$ epochs) after the validation-loss minimum, epoch $250$/$150$/$100$/$150$ for revolute/cylindrical/planar/universal; multi-task: epoch $400$, the validation minimum. Real: $40$ steps/epoch, $600$ epochs; epoch-$150$ ($6$k steps) \\
\bottomrule
\end{tabular}
\end{table}

\paragraph{Platforms.}
Simulation uses NVIDIA Isaac Sim~\cite{NVIDIA_Isaac_Sim} with a flying gripper manipulating four articulated objects (revolute, cylindrical, planar, and universal joints; Figure~\ref{fig:sim_objects}).
Real-world experiments use a Franka Research~3 (FR3) torque-controlled arm on five physical objects (revolute, prismatic, helical, and cylindrical objects, plus a coffee-grinder crank).
The operational-space impedance law is run at a control frequency of $150$~Hz in simulation; the simulated flying gripper is gravity-free, and we use $V_d=0$ and $F_{\rm ff}=0$, omitting the Coriolis and reference-acceleration terms of Eq.~\eqref{eq:appA:Fff}.
On hardware, the FR3 controller receives dense desired-pose commands at $1$~kHz and the evaluation recorder logs aligned signals at $100$~Hz.

\paragraph{Simulation demonstration data.}
For each simulated object we collect $400$ demonstrations with a finite-state scripted expert: from a randomized object pose it closes the gripper and drives the object to a target configuration, computing handle poses by analytic forward kinematics.
This model information is withheld from the policy, the subspace estimator, and the controller at evaluation.
Table~\ref{tab:appH:simdata} lists the demonstrations and randomization.
After grasp closure the expert operates exactly each object's feasible coordinates---a single hinge rotation of $\pi$ for revolute; a $\pi$ rotation with $0.30$~m of axial travel for cylindrical; two in-plane translations and a half-turn for planar; and two $\pi/2$ revolute coordinates for the universal joint.
Replaying one fixed schedule would collapse the motion diversity the estimator depends on, so we randomize the operated-coordinate timing: a single-coordinate object follows one smooth zero-endpoint-velocity profile over $T=4.0$~s, while each coordinate of a multi-coordinate object (cylindrical, planar, universal) advances on its own profile with randomized onset and spread, yielding ensembles that range from near-synchronous to nearly sequential motion.

\begin{table}[H]
\centering
\caption{\textbf{Simulation policy data.}
Demonstration collection details for the four simulated objects (interface as in Table~\ref{tab:appH:dp}).
Stored actions are absolute end-effector poses, converted to relative pose chunks for training.
Placement is in the simulator world frame relative to the object origin $(0.5,0,0.5)$~m.}
\label{tab:appH:simdata}
\vspace{0.25em}
\small
\setlength{\tabcolsep}{4pt}
\begin{tabular}{l p{0.73\linewidth}}
\toprule
Item & Value \\
\midrule
Demonstrations & $400$ scripted rollouts per object \\
Object placement & Uniform $x\in[-0.15,0.15]$, $y\in[-0.20,0.20]$, $z\in[-0.10,0.10]$~m, yaw $\in[-15^\circ,15^\circ]$ \\
Gripper start & Fixed at $(-0.1,0,0.5)$~m before approach and grasp \\
Waypoint generation & Object progress mapped to handle poses by analytic forward kinematics, interpolated on $\SE(3)$ \\
\bottomrule
\end{tabular}
\end{table}

\paragraph{Real-world demonstration data.}
Real policies are trained from leader--follower teleoperation: the operator moves the leader arm and the follower FR3 logs its end-effector pose, wrist image, gripper state, and grasp commands at $10$~Hz.
We collect $20$ demonstrations each for the revolute, prismatic, and helical objects and $32$ each for the cylindrical joint and the coffee-grinder crank.
The policy interface matches simulation.
This follower-side pose, not the leader command, is the action the policy is trained to predict: in contact with the object it is physically shaped by the joint constraint and is thus (approximately) feasible, so the policy's sampled trajectories stay within the feasible subspace that the estimator of Section~\ref{sec:constraint_estimation} recovers.

\paragraph{Contact and force sensing.}
Projected wrench metrics use the gripper--object contact wrench at the end-effector frame in simulation, and a wrist-mounted ATI Axia80 six-axis force/torque sensor on the FR3.
These are used only for safety monitoring and evaluation, with no wrench feedback in the control loop (Appendix~\ref{app:e_dynamics}); \textbf{Iso} and \textbf{Ours} share the same safety thresholds and are scored over the same articulation range (below), keeping task progress comparable.

\paragraph{Evaluation metrics.}
Let $\mathcal S^{\rm ref}$ be the \emph{reference} feasible twist subspace, the analytic feasible subspace of the mechanism at the grasp pose of each rollout, in simulation and on the real robot, with basis $U^{\rm ref}=[(U^{\rm ref}_\omega)^\top,(U^{\rm ref}_v)^\top]^\top$.
We score wrench metrics \emph{blockwise} to avoid imposing an arbitrary length scale when combining force and moment into a single norm: with $Q_\omega^{\rm ref},Q_v^{\rm ref}$ orthonormal bases for the column spaces of $U^{\rm ref}_\omega,U^{\rm ref}_v$, the feasible/complement projectors are $P_{s,\parallel}=Q_s^{\rm ref}Q_s^{{\rm ref}\top}$ and $P_{s,\perp}=I_3-P_{s,\parallel}$ for $s\in\{\omega,v\}$.
These are marginal force/moment diagnostics, not a screw-coupled reciprocal-wrench norm.
Force is power-conjugate to the linear block and moment to the angular block under $F^\top V=f^\top v+\tau^\top\omega$, so the force uses the linear-block projectors $P_{v,\parallel/\perp}$ and the moment the angular-block projectors $P_{\omega,\parallel/\perp}$.
Splitting the measured external wrench into force $f_{\rm ext}$ and moment $\tau_{\rm ext}$ and writing $f_\perp=P_{v,\perp}f_{\rm ext}$, $\tau_\perp=P_{\omega,\perp}\tau_{\rm ext}$ (with $\parallel$ analogous), the primary metrics are the complement impulses and $95$th-percentile peaks
\begin{align}
\label{eq:appH:icf}
\text{ICF}&=\!\int\!\|f_\perp\|\,\mathrm dt, &
\text{ICM}&=\!\int\!\|\tau_\perp\|\,\mathrm dt, &&\text{(N$\cdot$s,\ N$\cdot$m$\cdot$s)}\\
\label{eq:appH:pcf}
\text{PCF}&=Q_{95}(\|f_\perp\|), &
\text{PCM}&=Q_{95}(\|\tau_\perp\|), &&\text{(N,\ N$\cdot$m)}
\end{align}
the precise blockwise form of the body metrics (Section~\ref{sec:experiments}).

As diagnostics we report two complementary fractions.
The complement \emph{load fractions} $\lambda_f,\lambda_m\in[0,1]$ give the share of the measured force and moment impulses in constrained directions, $\lambda_f=J_{f,\perp}/(J_{f,\parallel}+J_{f,\perp})$ and $\lambda_m=J_{\tau,\perp}/(J_{\tau,\parallel}+J_{\tau,\perp})$, where $J_{f,\parallel}=\int\|f_\parallel\|\,\mathrm dt$ and $J_{f,\perp}=\mathrm{ICF}$ (moment analogous).
The \emph{feedback feasible fractions} $\beta_f,\beta_m\in[0,1]$ instead give the reference-\emph{parallel} share of the commanded feedback wrench $(f_{\rm fb},\tau_{\rm fb})$, feedforward excluded: $\beta_f=J^{\rm fb}_{f,\parallel}/(J^{\rm fb}_{f,\parallel}+J^{\rm fb}_{f,\perp})$ with $J^{\rm fb}_{f,\parallel}=\int\|P_{v,\parallel}f_{\rm fb}\|\,\mathrm dt$, $J^{\rm fb}_{f,\perp}$ its complement, and $\beta_m$ analogous on $\tau_{\rm fb}$.
High $\beta$ means the feedback concentrates along feasible directions, opposite to $\lambda$'s complement numerator; a value is structural rather than informative when its block is zero-dimensional ($\beta{=}0$ when the feasible block is empty, $\beta{=}1$ when the complement block is empty).

\emph{Subspace Error} (SE, \textbf{Ours} only) is the largest principal angle, measured in the PCA metric $G$ (Euclidean angles after the transform $L_\alpha$ of Appendix~\ref{app:pca_scaled}, $G=L_\alpha^\top L_\alpha$), between $\widehat{\mathcal S}_{\rm env}$ and $\mathcal S^{\rm ref}$; in simulation and on the real robot alike, $L_\alpha$ uses a fixed evaluation length $\alpha^\star$ per object, set by the geometry of the analytic mechanism, rather than the estimator's own $\alpha$, so every rollout and controller is scored in the same metric.
\emph{Success Rate} (SR, simulation) counts trials that complete the common articulation range; a trial fails either because the policy does not complete the grasp or because the object does not reach the end of this range.
Every metric is computed over a common articulation range, identical for all controllers; on the real robot it runs from grasp completion until the object reaches a common absolute articulation target, so both controllers cover the same configuration change, and all $10$ rollouts per pair are kept without success filtering.

\paragraph{Evaluation sampling.}
Each simulation controller--object pair is evaluated over $100$ trials with the checkpoint of Table~\ref{tab:appH:dp}, for each single-task policy and for the multi-task policy; each real pair over $10$ rollouts with the epoch-$150$ checkpoint, i.e.\ $50$ trials per controller across the five objects.
In every trial \textbf{Ours} estimates its gains once at grasp closure, blends them in from the isotropic gains over $50$ control steps ($0.33$~s) in simulation, and holds them fixed.

\paragraph{End-to-end real-rollout visualization.}
Figure~\ref{fig:appH:real_ours_pipeline} traces the full Ours computation on representative real FR3 rollouts, from the grasp-time ensemble ($N{=}32$) to the final $6\times6$ impedance matrices $K_v,D_v$ in the gripper body frame; see the caption for the per-column construction.

\begin{figure}[p]
    \centering
    \includegraphics[width=\linewidth,height=0.78\textheight,keepaspectratio]{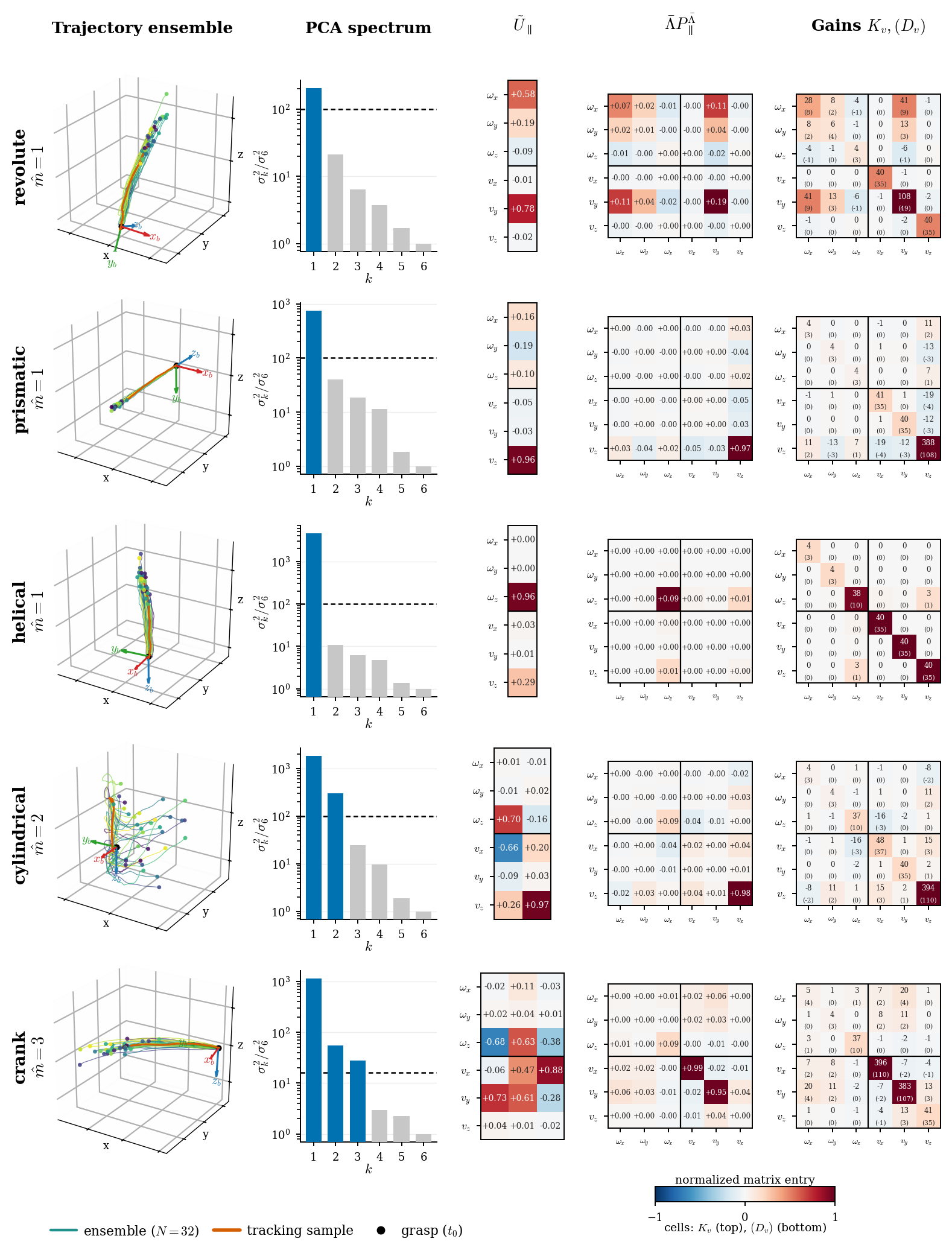}
    \caption{\textbf{From trajectory ensemble to constraint-decomposed impedance on the real FR3.}
    Each row is one real task, ordered by the estimated feasible dimension $\hat m$ (revolute/prismatic/helical $\hat m{=}1$, cylindrical $\hat m{=}2$, crank $\hat m{=}3$); all matrices are in the gripper body frame at the grasp pose.
    The columns trace the pipeline of Algorithm~\ref{alg:constraint_decomposed}: (1) the grasp-closure ensemble ($N=32$) recentered on the grasp point and colored by sample index, with the selected tracking trajectory highlighted and the body frame $(x_b,y_b,z_b)$ drawn as the red/green/blue triad; (2) the singular-value ratio $\widetilde\sigma_k^2/\widetilde\sigma_6^2$ $=\lambda_k/\lambda_6$ and the dimension gate $\kappa_{\rm rel}^2$ (dashed) selecting $\hat m$; (3) the retained PCA-scaled basis $\widetilde U_\parallel=L_\alpha U_\parallel$, rows ordered $(\omega_x,\omega_y,\omega_z,v_x,v_y,v_z)$; (4) the projector building block $\bar\Lambda P_\parallel^{\bar\Lambda}$ (Appendix~\ref{app:c_projectors}); and (5) the resulting stiffness $K_v=k_\perp\bar\Lambda+(k_\parallel-k_\perp)\bar\Lambda P_\parallel^{\bar\Lambda}$, with the damping entries $D_v$ in parentheses.
    Printed numbers are the physical entries; cell color encodes the signed, per-task-normalized value (blue $<0$, red $>0$; as in Fig.~\ref{fig:appH:alpha_kv_heatmaps}).}
    \label{fig:appH:real_ours_pipeline}
\end{figure}

\section{Simulation Experiments: Additional Results and Diagnostics}
\label{app:sim_extra}

This section expands Section~\ref{sec:exp_sim} with per-task force and moment diagnostics of the single-task and multi-task policies (Table~\ref{tab:appI:sim_force}).
Wrench and load-fraction entries are arithmetic means over the three-controller common-success set, so each row compares Iso, Ours, and Oracle on the same episodes.

\begin{table}[t]
\centering
\caption{\textbf{Simulation rollout diagnostics} (auxiliary to Table~\ref{tab:sim_main} and Table~\ref{tab:appI:multitask}).
Entries are Iso\,/\,Ours\,/\,Oracle.
$n_c$ is the three-controller common-success count for wrench metrics.
$\lambda_f,\lambda_m$ are the force/moment complement load fractions, $\beta_f,\beta_m$ the force/moment-side feedback feasible fractions (Appendix~\ref{app:impl}; $\beta_m{\equiv}1$ for the universal joint, whose angular complement is empty).}
\label{tab:appI:sim_force}
\vspace{0.25em}
\scriptsize
\setlength{\tabcolsep}{3.5pt}
\resizebox{\linewidth}{!}{%
\begin{tabular}{l c c c c c}
\toprule
Task & $n_c$ & $\lambda_f$ & $\lambda_m$ & $\beta_f$ & $\beta_m$\\
\midrule
\multicolumn{6}{l}{\textit{Single-task policies}}\\
Revolute    & \phantom{0}99 & .26 / .12 / .10 & .245 / .177 / .181 & .75 / .92 / .95 & .86 / .96 / .96\\
Cylindrical & \phantom{0}99 & .15 / .03 / .02 & .122 / .086 / .084 & .85 / .98 / .98 & .94 / .98 / .98\\
Planar      & \phantom{0}99 & .68 / .21 / .24 & .114 / .077 / .057 & .32 / .81 / .78 & .90 / .95 / .98\\
Universal   & 100 & .31 / .14 / .11 & .000 / .000 / .000 & .69 / .87 / .91 & 1.00 / 1.00 / 1.00\\
\midrule
\multicolumn{6}{l}{\textit{Multi-task policy}}\\
Revolute    & \phantom{0}99 & .27 / .13 / .11 & .242 / .197 / .190 & .74 / .92 / .94 & .86 / .96 / .95\\
Cylindrical & \phantom{0}95 & .13 / .04 / .02 & .119 / .076 / .077 & .87 / .96 / .98 & .93 / .98 / .98\\
Planar      & 100 & .68 / .22 / .24 & .112 / .078 / .056 & .32 / .81 / .78 & .90 / .95 / .98\\
Universal   & \phantom{0}99 & .29 / .14 / .11 & .000 / .000 / .000 & .72 / .88 / .92 & 1.00 / 1.00 / 1.00\\
\bottomrule
\end{tabular}%
}
\end{table}

For both the single-task and the multi-task policies, Ours lowers the force load fraction $\lambda_f$ and raises the force-side feedback feasible fraction $\beta_f$ on every object, close to the values of Oracle; the SE is largest for the configuration-dependent universal joint (Table~\ref{tab:sim_main}).

\paragraph{Multi-task policy.}
\label{app:sim_multitask}
We evaluate the multi-task policy of Appendix~\ref{app:impl}, trained on all four simulated objects, in the same way as the single-task policies.
Table~\ref{tab:appI:multitask} shows that the improvements of Table~\ref{tab:sim_main} are preserved: the force impulse of Ours is $0.14$--$0.37{\times}$ that of Iso, and Ours recovers at least $80\%$ of the Oracle's reduction relative to Iso in force impulse and peak force.

\begin{table}[t]
\centering
\caption{\textbf{Simulation results with one multi-task policy.}
Entries are Iso / Ours / Oracle.
In Table~\ref{tab:sim_main} and here, SR counts all trials, and the other entries are means over the trials in which all three controllers succeed; SE reports Ours only, and boldface marks the better of Iso and Ours.}
\label{tab:appI:multitask}
\vspace{0.25em}
\scriptsize
\setlength{\tabcolsep}{2.5pt}
\resizebox{\linewidth}{!}{%
\begin{tabular}{l c c c c c c}
\toprule
Task & SR (\%) & SE (\textdegree) & ICF (N$\cdot$s) & PCF (N) & ICM (N$\cdot$m$\cdot$s) & PCM (N$\cdot$m)\\
\midrule
Revolute & \phantom{0}99 / \phantom{0}99 / \phantom{0}99 & 1.71 & 7.75 / \textbf{2.87} / 2.55 & 3.55 / \textbf{1.76} / 1.81 & .31 / \textbf{.26} / .26 & \textbf{.21} / .22 / .22\\
Cylindrical & \phantom{0}96 / \phantom{0}\textbf{97} / \phantom{0}96 & 2.39 & 4.61 / \textbf{1.36} / 0.64 & 1.74 / \textbf{0.67} / 0.51 & .31 / \textbf{.20} / .21 & .15 / \textbf{.13} / .15\\
Planar & 100 / 100 / 100 & 2.76 & 14.4 / \textbf{2.02} / 2.21 & 3.85 / \textbf{1.13} / 1.13 & .73 / \textbf{.49} / .34 & .29 / \textbf{.24} / .22\\
Universal & \phantom{0}99 / \textbf{100} / 100 & 3.83 & 1.98 / \textbf{0.45} / 0.33 & 1.17 / \textbf{0.23} / 0.19 & -- / -- / -- & -- / -- / --\\
\bottomrule
\end{tabular}
}
\end{table}

\paragraph{Ensemble size and runtime.}
\label{app:sim_ensemble_size}
Table~\ref{tab:appI:ensemble_size} gives the full ensemble-size sweep of Section~\ref{sec:exp_analysis}, pooled over the four objects: the SE falls steeply up to $N{=}4$ and slowly afterwards, and the dimension rule selects the true dimension on every trial from $N{=}64$.
The runtime of $70.1$~ms in Section~\ref{sec:exp_analysis} is the median of the four per-object medians over $120$ runs ($10$ observations and $3$ seeds per object), and $76.5$~ms is the maximum over all runs.

\begin{table}[t]
\centering
\caption{\textbf{Ensemble-size sweep in simulation.} 95th-percentile SE with the dimension fixed to its true value, and the fraction of trials in which the dimension rule selects the true dimension, pooled over the four objects.}
\label{tab:appI:ensemble_size}
\vspace{0.25em}
\small
\begin{tabular}{r c c}
\toprule
$N$ & SE, 95th percentile (\textdegree) & Correct dimension (\%) \\
\midrule
$1$   & $78.95$ & $64.7$ \\
$2$   & $23.15$ & $84.2$ \\
$4$   & $8.59$  & $92.8$ \\
$8$   & $7.23$  & $97.7$ \\
$16$  & $6.92$  & $99.4$ \\
$32$  & $6.79$  & $99.8$ \\
$64$  & $6.68$  & $100$ \\
$128$ & $6.61$  & $100$ \\
\bottomrule
\end{tabular}
\end{table}

\paragraph{Sample diversity.}
\label{app:sim_diversity}
Table~\ref{tab:appI:diversity} gives the full results of the sample-diversity study of Section~\ref{sec:exp_analysis}, in which the most and least diverse subsets are chosen among $4012$ candidate $k$-sample subsets of each ensemble.
The most diverse subsets are markedly more accurate for $k\le16$; from $k{=}32$, the two selections differ by at most $0.1^\circ$ and approach the full-ensemble value.

\begin{table}[t]
\centering
\caption{\textbf{Sample-diversity study in simulation.}
Subspaces re-estimated from the $k$ most (DTW-max) or least (DTW-min) diverse samples of each $128$-sample ensemble; the last column uses all $128$ samples.}
\label{tab:appI:diversity}
\vspace{0.25em}
\small
\begin{tabular}{l l c c c c c c c}
\toprule
Statistic & Selection & $k{=}2$ & $4$ & $8$ & $16$ & $32$ & $64$ & $128$ \\
\midrule
Mean of per-object & DTW-max & $5.91$ & $3.32$ & $2.74$ & $2.61$ & $2.48$ & $2.49$ & $2.34$ \\
median SE (\textdegree) & DTW-min & $19.73$ & $9.93$ & $4.45$ & $3.43$ & $2.58$ & $2.43$ & $2.34$ \\
\midrule
Grasps with & DTW-max & $16.0$ & $8.8$ & $2.8$ & $2.3$ & $1.8$ & $1.0$ & $1.0$ \\
SE $>10^\circ$ (\%) & DTW-min & $40.5$ & $28.0$ & $13.8$ & $9.5$ & $1.5$ & $1.5$ & $1.0$ \\
\bottomrule
\end{tabular}
\end{table}

\paragraph{Robustness to noise.}
\label{app:sim_noise}
Table~\ref{tab:appI:noise} gives the full results of the noise studies of Section~\ref{sec:exp_analysis}; the added noise is per-step Gaussian.
Orientation noise in the demonstrations still raises SE to $10.59^\circ$: a policy trained on corrupted demonstrations can keep its success rate and validation loss and still yield wrong compliant directions.
With $2$ DDIM steps, the estimate collapses.

\begin{table}[t]
\centering
\caption{\textbf{Robustness to noise in simulation.}
Pooled median SE (\textdegree) over the four objects at $N{=}128$.
(a)~Noise of the same amplitude added directly to the sampled trajectories or to the demonstrations before training.
(b)~Image perturbations of the observation: Gaussian noise ($\sigma{=}32/255$), Gaussian blur ($\sigma{=}2$~px), and random shift ($\sigma{=}4$~px).
(c)~Number of DDIM steps at inference.}
\label{tab:appI:noise}
\vspace{0.25em}
\footnotesize
\setlength{\tabcolsep}{4pt}
\begin{minipage}[t]{0.42\linewidth}
\centering
(a) Training data\\[0.3em]
\begin{tabular}{l c c c}
\toprule
Noise & Direct & Trained & Ratio \\
\midrule
None & $2.14$ & $2.02$ & -- \\
Position $5$~mm & $11.97$ & $3.10$ & $0.26$ \\
Orientation $5^\circ$ & $16.05$ & $10.59$ & $0.66$ \\
\bottomrule
\end{tabular}
\end{minipage}\hfill
\begin{minipage}[t]{0.30\linewidth}
\centering
(b) Observations\\[0.3em]
\begin{tabular}{l c}
\toprule
Perturbation & SE \\
\midrule
None & $2.10$ \\
Gaussian noise & $2.21$ \\
Gaussian blur & $2.12$ \\
Random shift & $2.14$ \\
\bottomrule
\end{tabular}
\end{minipage}\hfill
\begin{minipage}[t]{0.24\linewidth}
\centering
(c) Inference\\[0.3em]
\begin{tabular}{r c}
\toprule
DDIM steps & SE \\
\midrule
$100$ & $2.16$ \\
$10$ & $2.14$ \\
$8$ & $2.12$ \\
$6$ & $2.14$ \\
$4$ & $2.26$ \\
$2$ & $87.38$ \\
\bottomrule
\end{tabular}
\end{minipage}
\end{table}

\section{Real-World Experiments: Additional Results and Diagnostics}
\label{app:real_extra}

This section expands Section~\ref{sec:exp_real} with diagnostics auxiliary to body Table~\ref{tab:real_main}: Table~\ref{tab:appJ:real_aux} reports per-task load-fraction and feedback diagnostics, all scored against the grasp-conditioned analytic reference of Appendix~\ref{app:impl}.

\begin{table}[t]
\centering
\caption{\textbf{Real-world auxiliary diagnostics.}
Entries are Iso / Ours (Ours/Iso) arithmetic means over the same $10$ rollouts per task as Table~\ref{tab:real_main}.
$\lambda_f,\lambda_m$ are the measured force/moment load fractions and $\beta_f,\beta_m$ the feedback feasible fractions of Appendix~\ref{app:impl}.
The rotational block of prismatic is structurally complementary ($\lambda_m{=}1$, $\beta_m{=}0$).}
\label{tab:appJ:real_aux}
\vspace{0.25em}
\scriptsize
\setlength{\tabcolsep}{4.5pt}
\resizebox{\linewidth}{!}{%
\begin{tabular}{l c c c c}
\toprule
Task & $\lambda_f$ & $\lambda_m$ & $\beta_f$ & $\beta_m$\\
\midrule
Revolute    & .766 / .672 (.88)  & .734 / .522 (.71) & .738 / .843 (1.14) & .717 / .848 (1.18) \\
Prismatic   & .362 / .321 (.89)  & 1.000 / 1.000 (1.00) & .848 / .918 (1.08) & .000 / .000 (--) \\
Helical     & .412 / .424 (1.03) & .724 / .569 (.79) & .187 / .332 (1.78) & .751 / .890 (1.18) \\
Cylindrical & .244 / .126 (.52)  & .536 / .421 (.79) & .708 / .894 (1.26) & .572 / .680 (1.19) \\
Crank       & .425 / .469 (1.10) & .926 / .850 (.92) & .841 / .892 (1.06) & .261 / .550 (2.11) \\
\bottomrule
\end{tabular}%
}
\vspace{-0.5em}
\end{table}

Averaged over the objects, Ours lowers the complement load fractions ($\lambda_f$ $0.91\times$, $\lambda_m$ $0.81\times$) and raises the feedback feasible fractions ($\beta_f$ $1.17\times$, $\beta_m$ $1.29\times$), with prismatic excluded for $\lambda_m$ and $\beta_m$.
The force load fraction rises slightly on the helical object ($1.03\times$) and on the crank ($1.10\times$), whose feedback moment is nonetheless much better aligned with the feasible directions ($\beta_m$ $2.11\times$).

\end{document}